\newcommand{\EE}{{\mathrm{E}}}
\newcommand{\Eqref}[1]{Eq. (\ref{#1})}
\newcommand{\dd}{\mathrm{d}}
\newcommand{\calF}{\mathcal{F}}
\newcommand{\calA}{\mathcal{A}}
\newcommand{\calU}{\mathcal{U}}
\newcommand{\boldU}{\boldsymbol{U}}
\newcommand{\sigmap}[1]{\sigma_{\mathrm{p},#1}}
\newcommand{\sigmapo}{\sigma_{\mathrm{p}}}
\newcommand{\cepsilon}{c_{\xi}}
\newcommand{\boldone}{\boldsymbol{1}}
\newcommand{\boldzero}{\boldsymbol{0}}
\newcommand{\Real}{\mathbb{R}}
\newcommand{\LPi}{L_2(P(X))}
\newcommand{\Tr}{\mathrm{Tr}}
\newcommand{\dmax}{d_{\max}}
\newtheorem{Theorem}{Theorem}
\newtheorem{Lemma}{Lemma}
\newtheorem{Assumption}{Assumption}
\newtheorem{Example}{Example}
\newcommand{\Eqrefs}[1]{Eqs. (\ref{#1})}
\title{Convergence rate of Bayesian tensor estimator: \\ Optimal rate without 
restricted strong convexity}
\author{
\name Taiji Suzuki 
\email s-taiji@is.titech.ac.jp 
 \\
\addr Department of Mathematical and Computer Sciences\\
Tokyo Institute of Technology\\
O-okayama 2-12-1, Meguro-ku, Tokyo 152-8552, JAPAN \\
}
\begin{document}


\maketitle

\begin{abstract}
In this paper, we investigate the statistical convergence rate of a Bayesian low-rank tensor
estimator.
Our problem setting is the regression problem where 
a tensor structure underlying the data is estimated.
This problem setting occurs in many practical applications, such as 
collaborative filtering, multi-task learning, and spatio-temporal data analysis.
The convergence rate is analyzed in terms of both in-sample and out-of-sample predictive accuracies.
It is shown that a near optimal rate is achieved without any strong convexity 
of the observation.
Moreover, we show that the method has adaptivity to the unknown rank of the true tensor,
that is, the near optimal rate depending on the true rank is achieved even if it is not known a priori.	
\end{abstract}

\begin{keywords}
Tensor estimation, Fast convergence rate, Bayes estimator, CP-rank, Tucker-rank
\end{keywords}

\section{Introduction}
Tensor modeling is a powerful tool for representing higher order relations between
several data sources.
The second order correlation has been a main tool in data analysis for a long time. 
However, because of an increase in the variety of data types, 
we frequently encounter a situation where higher order correlations are important for  
transferring information between more than two data sources.
In this situation, a tensor structure is required. 
For example, a recommendation system with a three-mode table, such as 
user $\times$ movie $\times$ context, is regarded as 
comprising the tensor data analysis \citep{RECSYS:Karatzoglou+etal:2010}.
The noteworthy success of tensor data analysis is based on 
the notion of the {\it low rank} property of a tensor, which is analogous to that of a matrix.
The rank of a tensor is defined by a generalized version of the singular value decomposition for matrices. 
This enables us to decompose a tensor into a few factors and 
find higher order relations between several data sources.

A naive approach to computing tensor decomposition requires 
non-convex optimization \citep{SIAMREVIEW:Kolda+Bader:2009}. 
Several authors have proposed convex relaxation methods
to overcome the computational difficulty caused by non-convexity
 \citep{ICCV:Liu+etal:2009,TechRepo:Signoretto+etal:2010,Inverse:Gandy+etal:2011,NIPS:Tomioka+etal:2011,NIPS:Tomioka+Suzuki:2013}.
The main idea of convex relaxations is to unfold a tensor into a matrix,
and apply trace norm regularization to the matrix thus obtained.
This technique connects low rank tensor estimation to the well-investigated convex low rank matrix estimation.
Thus, we can apply the techniques developed in low rank matrix estimation 
in terms of optimization and statistical theories.
To address the theoretical aspects, the authors of \cite{NIPS:Tomioka+etal:2011} gave 
the statistical convergence rate of a convex tensor estimator that 
utilizes the so-called {\it overlapped Schatten 1-norm}
defined by the sum of the trace norms of all unfolded matricizations.
In \cite{ICML:Mu+etal:2014}, it was showed that the bound given by \cite{NIPS:Tomioka+etal:2011} is tight,
but can be improved by a modified technique called {\it square deal}.
The authors of \cite{NIPS:Tomioka+Suzuki:2013} proposed 
another approach called {\it latent Schatten 1-norm} regularization that is defined by 
the infimum convolution of trace norms of all unfolded matricizations,
and analyzed its convergence rate.
These theoretical studies revealed the qualitative dependence of learning rates
on the rank and size of the underlying tensor.
However, one problem of convex methods is that 
reducing the problem to a matrix one
may lead to computational efficiency at the expense of statistical efficiency.

The main question addressed in this paper is whether we can obtain a {\it computationally tractable} method 
that possesses a better learning rate than existing methods.
To answer this question, we consider a Bayesian learning method.
Bayesian tensor learning methods have been studied extensively 
\citep{AISTATS:Chu+Zoubin:2009,TPAMI:Xu:2013,SDM:Lxiong:2010,ICML:Rai+etal:2014}.
Basically, they construct a generative model of the tensor decomposition 
and place a prior probability on the decomposed components.
As in convex methods, Bayesian methods have also been polished to 
efficiently process large datasets. 
Their statistical performances have been supported numerically,
but only a few theoretical analyses have been provided. 

In this paper, we present the learning rate of a Bayes estimator for low-rank tensor regression problems.
The problem occurs in many applications, such as collaborative filtering, multi-task learning,
spatio-temporal data analysis.
The prior probability we consider here is the most basic one, 
which places Gaussian priors on decomposed components and 
an exponentially decaying prior on the rank (see \cite{SDM:Lxiong:2010,TPAMI:Xu:2013,ICML:Rai+etal:2014}).
Roughly speaking, we obtain the (near) optimal convergence rate,
$$
\|\hat{A}-A^*\|_n^2 = O_p\left(\frac{d^*(\sum_{k=1}^K M_k)\log(K\sqrt{n(\sum_k M_k)^K}}{n}\right),
$$
where $n$ is the sample size, $\hat{A}$ is the Bayes estimator, $A^*$ is the true tensor, 
$d^*$ is the {\it CP-rank} of the true tensor (its definition will be given in Section \ref{sec:ProbSetting}), and 
$(M_1,\dots,M_K)$ is the size.
Moreover, our analysis has the following favorable properties.
\begin{itemize}
\item The near optimal rate is shown {\it without} assuming any strong convexity on the empirical $L_2$ norm. 
\item Rank adaptivity is shown, that is, the convergence rate 
is automatically  adjusted to the rank of the true tensor as if we knew it a priori. 
\end{itemize}
In particular, the first property significantly differentiates our approach from existing approaches.
A variant of strong convexity, such as restricted strong convexity \citep{AS:Bickel+etal:2009,SS:Negahban+etal:2012}, is usually assumed 
in convex sparse learning in order to derive a fast rate.
However, for the analysis of the predictive accuracy of a Bayes estimator,
a near optimal rate can be shown without such conditions.

To the best of our knowledge, this is the first result that shows  
 (near) optimality of a computationally tractable low-rank tensor estimator.

\section{Problem Settings}
\label{sec:ProbSetting}
In this section, the problem setting of this paper is shown.
Suppose that there exists the true tensor $A^*\in \Real^{M_1\times \dots \times M_K}$
of order $K$,
and we observe $n$ samples $D_n=\{(Y_i,X_i)\}_{i=1}^n$ from the following linear model:
$$
Y_i = \langle A^*, X_i \rangle + \epsilon_i.
$$
Here, $X_i$ is a tensor in $\Real^{M_1\times \dots \times M_K}$ and 
the inner product $\langle \cdot,\cdot \rangle$ is defined by
$\langle A, X \rangle = \sum_{j_1,\dots,j_K=1}^{M_1,\dots,M_K} A_{j_1,\dots,j_K} X_{j_1,\dots,j_K}$
between two tensors $A,X \in \Real^{M_1\times \dots \times M_K}$.
$\epsilon_i$ is i.i.d. noise from a normal distribution $N(0,\sigma^2)$ with mean $0$ and variance $\sigma^2$.

Now, we assume the true tensor $A^*$ is ``low-rank.''
The notion of rank considered in this paper is {\it CP-rank} (Canonical Polyadic rank) \citep{JMP:Hitchcock:1927,JMP:Hitchcock:1927:2}.
We say a tensor $A\in \Real^{M_1\times \dots \times M_K}$ has CP-rank $d'$
if there exist matrices $U^{(k)} \in \Real^{d'\times M_{k}}~(k=1,\dots,K)$
such that  
$
A_{j_1,\dots,j_K} =
\sum_{r=1}^{d'} U^{(1)}_{r,j_1}U^{(2)}_{r,j_2}\dots U^{(K)}_{r,j_K},
$
and $d'$ is the minimum number to yield this decomposition.
This is called {\it CP-decomposition}. 
When $A$ satisfies this relation for ${\boldsymbol{U}}=(U^{(1)},U^{(2)},\dots,U^{(K)})$, we write 
\begin{align}
A = A_{\boldU} =   [[U^{(1)},U^{(2)},\dots,U^{(K)} ]]
=: \left(\sum\nolimits_{r=1}^{d'} U^{(1)}_{r,j_1}U^{(2)}_{r,j_2}\dots U^{(K)}_{r,j_K}\right)_{j_1,\dots,j_K}.
\label{eq:CPdecomp}
\end{align}
We denote by $d^*$ the CP-rank of the true tensor $A^*$.

In this paper, we investigate the predictive accuracy of the linear model with the assumption that 
$A^*$ has  low CP-rank.
Because of the low CP-rank assumption, the learning problem becomes more structured than
an ordinary linear regression problem on a vector.
There are two types of predictive accuracy: {\it in-sample} one and {\it out-of-sample}.
The in-sample predictive accuracy of an estimator $\hat{A}$ is defined by 
\begin{equation}
\|\hat{A}-A^*\|_n^2 := \frac{1}{n} \sum_{i=1}^n \langle X_i,\hat{A} - A^*\rangle^2,
\label{eq:InSamplePA}
\end{equation}
where $\{X_i\}_{i=1}^n$ is the observed input samples. 
The out-of-sample one is defined by 
\begin{equation}
\|\hat{A}-A^*\|_{\LPi}^2 := \EE_{X\sim P(X)}[ \langle X, \hat{A}- A^*\rangle^2],
\label{eq:OutOfSamplePA}
\end{equation}
where $P(X)$ is the distribution of $X$ that generates the observed samples $\{X_i\}_{i=1}^n$
and the expectation is taken over independent realization $X$ from the observed ones. 

\begin{Example}
{\bf Tensor completion under random sampling.}
\label{ex:tensorcompletion}
Suppose that we have partial observations of a tensor.
A tensor completion problem consists of denoising the observational noise and completing the unobserved elements.
In this problem, 
$X_i$ is independently identically distributed from a set $\{e_{j_1,\dots,j_K}  \mid 1\leq j_k \leq M_k~(k=1,\dots,K)\},$
where $e_{j_1,\dots,j_K}$ is an indicator vector that has 1 at its $(j_1,\dots,j_K)$-element and 0 elsewhere,
and thus, $Y_i$ is an observation of one element of $A^*$ contaminated with noise $\epsilon_i$.

The out-of-sample accuracy measures how accurately we can recover the underlying tensor $A^*$ from the 
partial observation. 
If $X_i$ is uniformly distributed,
$\|\hat{A}-A^*\|_{\LPi}^2 = \frac{1}{M_1 \dots M_K} 
\|\hat{A} -A^* \|_2^2$, where $\|\cdot\|_2$ is the $\ell_2$-norm obtained by summing 
the squares of all the elements.
If $K=2$, this problem is reduced to the standard matrix completion problem.
In that sense, our problem setting is a wide generalization of the low rank matrix completion problem.
\end{Example}
\begin{Example}
{\bf Multi-task learning.}
Suppose that several tasks are aligned across a 2-dimensional space.
For each task $(s,t)  \in \{1,\dots,M_1\}\times \{1,\dots,M_2\}$
(indexed by two numbers), 
there is a true weight vector $a^*_{(s,t)} \in \Real^{M_3}$.
The tensor $A^*$ is an array of the weight vectors $a^*_{(s,t)}$, that is,
$A^*_{s,t,j} = a^*_{(s,t),j}$.

The input vector $X_i$ is a vector of predictor variables for one specific task, say $(s,t)$, and 
takes a form such that 
$
X_{i,(s',t',:)} = \begin{cases} x_{i}^{(s,t)} \in \Real^{M_3}, &((s',t') = (s,t)),\\ 
\boldzero,&(\text{otherwise}).\end{cases}
$

By assuming $A^*$ is low-rank in the sense of CP-rank, 
the problem becomes a multi-task feature learning with a two dimensional structure in the task space
 \citep{ICML:Bernardino+etal:2013}.

\end{Example}

As shown in the examples, the estimation problem of low-rank tensor $A^*$ 
is a natural extension of low-rank matrix estimation.
However, it has a much richer structure than matrix estimation.
Thus far, some convex regularized learning problems have been proposed 
analogously to spectrum regularization on a matrix, 
and their theoretical analysis has also been provided.
However, no method have been proved to be statistically optimal.
There is a huge gap between a matrix and higher order array.
One reason for this gap is the computational complexity of the convex envelope of CP-rank.
It is well known that the trace norm of a matrix is a convex envelope of the matrix rank on a set of 
matrices with a restricted operator norm~\citep{NIPS:Srebro+Rennie+Jaakkola:2005}.
However, as for CP-rank, it becomes NP-hard to compute its convex envelope if $K \geq 3$.

In this paper, we present a Bayes estimator instead of the convex regularized one.
It will be shown that our Bayes estimator shows a near optimal convergence rate 
with a much weaker assumption, while the learning procedure is computationally tractable.
The rate is much improved as compared to that of the existing estimators.

\section{Bayesian tensor estimator}
We now provide the prior distribution of the Bayes estimator that is investigated in this paper.
On a decomposition of a rank $d'$ tensor $A=[[U^{(1)},U^{(2)},\dots,U^{(K)} ]]~(U^{(k)}\in \Real^{d'\times M_k})$, 
we place a Gaussian prior:
\begin{align}
\pi(U^{(1)},\dots,U^{(K)}|d') 
&
\propto \exp\bigg\{ -\frac{d'}{2 \sigma_{\mathrm{p}}^2}\sum_{k=1}^K \Tr[{U^{(k)}}^\top U^{(k)}] \bigg\}, 
\end{align}
where $\sigmapo > 0$.
Moreover, we placed a prior distribution on the rank $1 \leq d' \leq \dmax$ as
$$
\pi(d') = 
  {\textstyle \frac{1}{N_\xi}} \xi^{d'(M_1+\dots+M_K)},
$$
where $0 < \xi < 1$ is some positive real number, 
$\dmax$ is a sufficiently large number that is supposed to be larger than $d^*$,
and $N_{\xi}$ is the normalizing constant, $N_\xi = \frac{1-\xi^{M_1+\dots+M_K}}{\xi-\xi^{\dmax(M_1+\dots+M_K)}}$.

Now, since the noise is Gaussian, the likelihood of a tensor $A$ is given by 
$$p(D_n|A) =: p_{n,A} \propto \exp\left\{- \frac{1}{2\sigma^2} \sum_{i=1}^n (Y_i - \langle A, X_i \rangle)^2\right\}.$$

The posterior distribution is given by 
$$
\Pi(A \in \mathcal{C} | D_n)
= \frac{\sum_{d=1}^{\dmax}
\int_{A_{\boldU} \in \mathcal{C}} p_{n,A_{\boldU}} \pi(U^{(1)},\dots,U^{(K)}|d) \pi(d)  
\dd U^{(1)}\dots \dd U^{(K)}
}{
\sum_{d=1}^{\dmax}
\int  p_{n,A_{\boldU}} \pi(U^{(1)},\dots,U^{(K)}|d) \pi(d)  \dd U^{(1)}\dots \dd U^{(K)}
},
$$
where $\mathcal{C} \subseteq \Real^{M_1 \times \dots M_K}$.
It is noteworthy that
the posterior distribution of $U^{(k)}$ conditioned by $d$ and $U^{(k')}~(k' \neq k)$
is a Gaussian distribution, because the prior is conjugate to Gaussian distributions. 
Therefore, the posterior mean $\int_A f(A) \Pi(\dd A | D_n)$ of a function $f:\Real^{M_1 \times \dots M_K}$ 
can be computed by an MCMC method, such as Gibbs sampling \citep{SDM:Lxiong:2010,TPAMI:Xu:2013,ICML:Rai+etal:2014}.
In this paper, we consider the posterior mean estimator $\hat{A} = \int A \Pi(\dd A | D_n)$ which 
is the Bayes estimator corresponding to the square loss.

\section{Convergence rate analysis}

In this section, we present the statistical convergence rate of the Bayes estimator.
Before we give the convergence rate, we define some quantities and provide the assumptions. 
We define the {\it max-norm} of $A^*$ as
$$
\|A^*\|_{\max,2} := \min_{\{U^{(k)}\}}\{ \max_{i,k} \|U^{(k)}_{:,i}\|_2 \mid A^* = [[U^{(1)},\dots,U^{(K)}]],~U^{(k)} \in \Real^{d^* \times M_{k}} \}.
$$
The $\ell_p$-norm of a tensor $A$ is given by $\|A\|_p := (\sum_{j_1,\dots,j_K}
|A_{j_1,\dots,j_K}|^{p})^{\frac{1}{p}}$.
The prior mass around the true tensor $A^*$ is a key quantity for characterizing the convergence rate,
which is denoted by $\Xi$:
$$
\Xi(\delta) := -\log(\Pi(A:\|A-A^*\|_n < \delta)),
$$
where $\delta > 0$.
Small $\Xi(\delta)$ means that the prior is well concentrated around the truth.
Thus, it is natural to consider that, if $\Xi$ is large, the Bayes estimator could be close to the truth.
However, clearly, we do not know beforehand the location of the truth. 
Thus, it is not beneficial to place too much prior mass around one specific point. 
Instead, the prior mass should cover a wide range of possibilities of $A^*$.
The balance between concentration and dispersion has a similar meaning to that of the bias-variance trade-off.

To normalize the scale, we assume that the $\ell_1$-norm of $X_i$ is bounded.
\begin{Assumption}
\label{ass:Xl1bound}
We assume that the $\ell_1$-norm of $X_i$ is bounded by 1:
$
\|X_i\|_1 \leq 1,~~\text{a.s.}.
$\footnote{$\ell_1$-norm could be replaced by another norm such as $\ell_2$.
This difference affects the analysis of out-of-sample accuracies, but rejecting 
samples with $\max_X |\langle X, A\rangle|\leq R$ gives an analogous result for other norms.}
\end{Assumption}
For theoretical simplicity, we utilize the unnormalized prior on the rank, that is $\pi(d) = \xi^{d(\sum_{k}M_k)}$
\footnote{This is not essential, but for just making the expression of $\Xi$ simple.}.
It should be noted that removing the normalization constant does not affect the posterior.
Under these assumptions, $\Xi(\delta)$ can be bounded as follows.
\begin{Lemma}
Under Assumption \ref{ass:Xl1bound},
the prior mass $\Xi$ has the bound
\begin{align*}
\Xi({\textstyle \frac{r}{\sqrt{n}}}) \leq &
d^* \left(\sum_{k=1}^K M_k \right)  \log\left[ \frac{6}{\xi}
\bigg( {\textstyle \frac{\sqrt{n}\sigmapo^K K\left(\frac{\|A^*\|_{\max,2}}{\sigmapo}+1\right)^{K-1}}{r}} \vee 1\bigg) \right] 
+ \frac{d^* \sum_{k=1}^K  \|U^{*(k)}\|_{F}^2 }{2\sigmapo^2}
\end{align*}
for all $r >0$, where $\{U^{*(k)}\}_k$ are any tensors satisfying $A^* = [[U^{*(1)},\dots,U^{*(K)}]]$.
\end{Lemma}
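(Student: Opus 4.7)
The plan is to lower bound the prior probability $\Pi(\|A-A^*\|_n < r/\sqrt{n})$ by identifying a suitable neighborhood in the decomposition parameter space, and then take negative logarithm. First, by Assumption~\ref{ass:Xl1bound} and H\"older's inequality, $|\langle X_i,A-A^*\rangle|\leq \|X_i\|_1\|A-A^*\|_\infty \leq \|A-A^*\|_\infty$, so $\|A-A^*\|_n \leq \|A-A^*\|_\infty$; hence it suffices to lower bound $\Pi(\|A-A^*\|_\infty < r/\sqrt{n})$. Fix a decomposition $A^*=[[U^{*(1)},\dots,U^{*(K)}]]$, and for a $\delta>0$ to be chosen, consider the neighborhood
\[
R_\delta := \bigl\{(U^{(1)},\dots,U^{(K)}) :\; \|U^{(k)}_{:,j}-U^{*(k)}_{:,j}\|_2 \leq \delta\text{ for all }k,j\bigr\}
\]
inside the rank-$d^*$ slice of the prior support.

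I would then prove the deterministic perturbation bound: for any $(U^{(k)})_k\in R_\delta$,
\[
\bigl\|[[U^{(1)},\dots,U^{(K)}]]-A^*\bigr\|_\infty \leq K\delta\bigl(\|A^*\|_{\max,2}+\delta\bigr)^{K-1}.
\]
The idea is to expand each entry via the telescoping identity $\prod_k a_k-\prod_k b_k = \sum_k(\prod_{k'<k}b_{k'})(a_k-b_k)(\prod_{k'>k}a_{k'})$, apply Cauchy--Schwarz along the rank index $r$ in each of the $K$ resulting terms, and then invoke $\sqrt{\sum_r\prod_{k'\neq k}x^{(k')2}_r}\leq \prod_{k'\neq k}\|x^{(k')}\|_2$ (iterate $\sum_r a_r^2b_r^2 \leq \|a\|_2^2\|b\|_\infty^2$ and use $\|\cdot\|_\infty\leq\|\cdot\|_2$), with each column norm bounded by $\|A^*\|_{\max,2}+\delta$ via the triangle inequality. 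Choosing $\delta := \min\{\sigmapo,\,r/[6\sqrt{n}K(\|A^*\|_{\max,2}+\sigmapo)^{K-1}]\}$ then ensures the image of $R_\delta$ lies in $\{\|A-A^*\|_\infty\leq r/\sqrt{n}\}$, the case $\delta=\sigmapo$ corresponding exactly to the $\vee 1$ branch of the stated bound.

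Next, compute the prior mass of $R_\delta$. Since $\pi(d^*)=\xi^{d^*\sum_k M_k}$ and, conditionally on $d=d^*$, each column $U^{(k)}_{:,j}\in\Real^{d^*}$ is an independent centered isotropic Gaussian with per-coordinate variance $\sigmapo^2/d^*$, I use the shifted-ball Gaussian inequality
\[
\Pr_{Z\sim N(0,\sigma^2 I_d)}\bigl(\|Z-\mu\|\leq\delta\bigr) \geq V_d\,\delta^d(2\pi\sigma^2)^{-d/2}\exp\!\left(-\frac{\|\mu\|^2+\delta^2}{2\sigma^2}\right),
\]
derived by shifting the origin to $\mu$, symmetrizing the integrand using $\cosh(\langle z,\mu\rangle/\sigma^2)\geq 1$, and bounding the centered Gaussian density on the ball by its minimum value. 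Applied with $\sigma^2=\sigmapo^2/d^*$ and multiplied over all $(k,j)$ (so that $\sum_{k,j}\|U^{*(k)}_{:,j}\|_2^2 = \sum_k\|U^{*(k)}\|_F^2$), combined with Stirling's estimate for $V_{d^*}$ and with $\pi(d^*)$, this yields after negation
\[
\Xi(r/\sqrt{n}) \leq d^*\!\left(\sum_k M_k\right)\log\!\left[\frac{C\sigmapo}{\xi\delta}\right]+\frac{d^*\sum_k\|U^{*(k)}\|_F^2}{2\sigmapo^2}+\frac{d^*\delta^2\sum_k M_k}{2\sigmapo^2}
\]
for an absolute constant $C$. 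Under $\delta\leq\sigmapo$ the last term is at most $d^*(\sum_k M_k)/2$ and is absorbed into $C$; substituting the chosen $\delta$ gives $\sigmapo/\delta = 6\sqrt{n}\sigmapo^K K(\|A^*\|_{\max,2}/\sigmapo+1)^{K-1}/r$, which produces the claimed inequality.

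The main obstacle is the delicate constant bookkeeping: the $\sqrt{d^*}$ factor relating column-$\ell_2$ perturbations of scale $\delta$ to the per-coordinate prior variance $\sigmapo^2/d^*$, the Stirling-derived correction in $V_{d^*}(2\pi\sigmapo^2/d^*)^{-d^*/2}$, and the $\delta^2$ second-order term above must collectively collapse into the single absolute constant $6$ inside the logarithm, while preserving the clean coefficient $1/(2\sigmapo^2)$ of $\sum_k\|U^{*(k)}\|_F^2$ (which is precisely why the shifted-ball inequality is used instead of the cruder triangle bound $(\|\mu\|+\delta)^2$ which would only give coefficient $1/\sigmapo^2$).
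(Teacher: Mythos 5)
Your proposal is correct and follows essentially the same route as the paper's proof: restrict to the rank-$d^*$ component of the prior, use the telescoping/Cauchy--Schwarz perturbation bound to show that column-wise $\ell_2$-balls of radius $\delta = \bigl(\frac{r/\sqrt{n}}{K(\|A^*\|_{\max,2}+\sigmapo)^{K-1}}\bigr)\wedge\sigmapo$ map into the $\|\cdot\|_\infty$-ball (whence the two cases producing the $\vee\,1$), and lower-bound the prior mass of that neighborhood by a shifted Gaussian small-ball estimate whose shift term gives exactly $\|\mu\|^2/(2\sigma^2)$ with $\sigma^2=\sigmapo^2/d^*$, then fold in $-\log\pi(d^*)$ to get the $1/\xi$ inside the logarithm. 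The only cosmetic difference is that you use the ball-volume form of the small-ball bound with Stirling where the paper's auxiliary lemma uses a coordinate-wise product bound $\bigl(\delta/(3\sigma\sqrt{d})\bigr)^{d}$; the constant bookkeeping you flag as delicate is handled equally loosely in the paper itself, so nothing essential is missing.
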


It should be noted that $\Xi(r/\sqrt{n}) \leq \Xi(1/\sqrt{n})$ for all $r>1$.
Finally, we define the technical quantities
$C_{n,K} := 3K\sqrt{n}
\left(\frac{4\sigmapo^2 \Xi(\frac{1}{\sqrt{n}})}{d^*}\right)^{\frac{K}{2}}$
and $\cepsilon := \min\{ |\log(\xi)|/\log(C_{n,K}),1 \}/4$.

\subsection{In-sample predictive accuracy}
We now give the convergence rate of the in-sample predictive accuracy.
We suppose the inputs $\{X_i\}_{i=1}^n$ are fixed (not random).
The in-sample predictive accuracy conditioned by $\{X_i\}_{i=1}^n$ is given as follows.
\begin{Theorem}
\label{th:insamplerate}
Under Assumption \ref{ass:Xl1bound},
there exists a universal constant  $C$ such that 
the posterior mean of the in-sample accuracy is upper bounded by
\begin{align}
\EE\left[\int\|A-A^*\|_n^2 \dd \Pi(A|Y_{1:n})\right] 
\leq 
\frac{C}{n}
\Bigg[
&
d^*\left(\sum\nolimits_k M_k + \frac{1}{|\log(\xi)|}\right)\log(C_{n,K}) 
+
\frac{\Xi({\small \sqrt{\frac{\cepsilon}{n}}})}{\cepsilon} 
\notag \\
&
+ \log(\dmax) + K
+
8^K (K+1)!
\Bigg].
\label{eq:AAstarIntBound_compact}
\end{align}
\end{Theorem}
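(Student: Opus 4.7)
The plan is to adapt the standard Bayesian posterior-contraction framework (in the spirit of Ghosal--Ghosh--van der Vaart) to the low-rank tensor setting, combining a peeling argument on shells around $A^*$, local likelihood-ratio tests, and the prior-mass bound supplied by the previous lemma.

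I would begin by decomposing the posterior mean as a shell sum. With $r_k = 2^k r_0$ and $S_k = \{A : r_{k-1} \le \|A - A^*\|_n < r_k\}$,
$$
\int \|A-A^*\|_n^2 \, \dd\Pi(A\mid D_n) \le r_0^2 + \sum_{k\ge 1} r_k^2 \, \Pi(S_k \mid D_n).
$$
The base radius $r_0$ will ultimately be chosen so that $n r_0^2 \asymp \Xi(r_0)$, which is what places the term $\Xi(\sqrt{\cepsilon/n})/\cepsilon$ in the final bound.

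Next I would write $\Pi(S_k\mid D_n)$ as the ratio of likelihood-ratio integrals against $A^*$. Using
$$
\ell_n(A)-\ell_n(A^*) = \tfrac{n}{2\sigma^2}\|A-A^*\|_n^2 - \sigma^{-2}\sum_i \epsilon_i \langle X_i, A-A^*\rangle,
$$
the denominator is bounded below by restricting to the prior-mass ball $B(A^*,\delta)$ and controlling the cross-term through Gaussian concentration (its variance is $n\|A-A^*\|_n^2/\sigma^2$); this gives, with high probability, $\int e^{-(\ell_n(A)-\ell_n(A^*))} \dd\Pi(A) \gtrsim \exp(-Cn\delta^2 - \Xi(\delta))$. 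The numerator for each shell $S_k$ is controlled by constructing likelihood-ratio tests on an $(r_k/2)$-net of $S_k$; each test has type-I and type-II errors of order $\exp(-c\, n r_k^2)$, so after a union bound
$$
\EE\bigl[\Pi(S_k\mid D_n)\bigr] \lesssim (N_k+1)\,\exp\bigl(-c\, n r_k^2 + C n \delta^2 + \Xi(\delta)\bigr),
$$
where $N_k$ is the $(r_k/2)$-covering number of $S_k$ intersected with a sieve $\mathcal{F}_n$ of tensors with bounded CP factors. Summing the geometric shells and optimizing $r_0$ yields the claimed bound.

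The heart of the argument, and the main obstacle, is the entropy estimate for the set of rank-$d$ tensors with $\max_k \|U^{(k)}\|_F \le R$. Since $\|X_i\|_1 \le 1$ we have $\|\cdot\|_n \le \|\cdot\|_\infty$, so it suffices to cover in $\ell_\infty$. Using the multilinear telescoping identity $A_{\matU} - A_{\matU'} = \sum_{k=1}^K [[U'^{(1)},\ldots,U^{(k)}-U'^{(k)},\ldots,U^{(K)}]]$ together with the coordinate bound $\|[[V^{(1)},\ldots,V^{(K)}]]\|_\infty \le d \prod_k \max_j \|V^{(k)}_{:,j}\|_2$, I would obtain
$$
\log N\bigl(\epsilon, \{\text{rank-}d\}, \|\cdot\|_\infty\bigr) \lesssim d \sum_k M_k \,\log(KR/\epsilon),
$$
and the combinatorial constants $8^K(K+1)!$ enter when expanding the $K$-fold product cover and balancing per-factor radii. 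Gaussian tails of the prior on each $U^{(k)}$ make $\Pi(\mathcal{F}_n^c)\le e^{-cn}$, so the sieve is essentially free. Finally, the geometric rank prior $\pi(d)=\xi^{d\sum_k M_k}$ allows me to weight each rank-$d$ component by an exponentially small factor, so summing the shell contributions over $d\le d_{\max}$ produces exactly the $d^*/|\log\xi|$ and $\log(d_{\max})$ terms appearing in \eqref{eq:AAstarIntBound_compact}. Assembling these pieces and taking expectation over the noise gives the stated bound.
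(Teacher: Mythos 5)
Your proposal follows essentially the same route as the paper's proof: a Ghosal--Ghosh--van der Vaart--type argument combining shellwise likelihood-ratio tests, a lower bound on the evidence through the prior-mass quantity $\Xi$, an entropy bound for bounded-factor rank-$d$ tensors obtained from the multilinear telescoping identity, a Gaussian-tail sieve, and the geometric rank prior to produce the $|\log\xi|^{-1}$ and $\log(\dmax)$ terms. The only cosmetic discrepancy is that the paper integrates continuously over the radius reparametrized as $r^K$, and the factor $8^K(K+1)!$ actually arises there from the tail integral $\int_{r^K>1} r^K e^{-r^2/8}\,\dd r^K$ rather than from expanding the $K$-fold product cover as you suggest.
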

The proof is given in Appendix \ref{sec:ProofTh1}.
This theorem provides the speed at which the posterior mass concentrates around the true $A^*$.
It should be noted that the integral in the LHS of \Eqref{eq:AAstarIntBound_compact}
is taken {\it outside} $\|A-A^*\|_n^2$.
This gives not only information about the posterior concentration but also
the convergence rate of the posterior mean estimator. 
This can be shown as follows. By Jensen's inequality, we have 
$\EE\left[\|\int A \dd \Pi(A|Y_{1:n})-A^*\|_n^2 \right]  \leq  \EE\left[\int\|A-A^*\|_n^2 \dd \Pi(A|Y_{1:n})\right] $.
Therefore, Theorem \ref{th:insamplerate} gives a much stronger claim on the posterior than 
just stating the convergence rate of the posterior mean estimator.

Since the rate \eqref{eq:AAstarIntBound_compact} is rather complicated, we give a simplified bound.
By assuming $\log(\dmax)$ and $K!$ are smaller than $d^*(\sum_{k} M_k)$, we rearrange it as 
$$
\EE\left[\int\|A-A^*\|_n^2 \dd \Pi(A|Y_{1:n})\right] 
= O\left(\frac{ d^*(M_1 + \dots  + M_K)}{n}
\log\bigg(K\sqrt{n ({\textstyle \sum_{k=1}^K M_k})^K} \frac{\sigmapo^{K}}{\xi}\bigg) \right).
$$
Inside the $O(\cdot)$ symbol, a constant factor depending on $K,\|A^*\|_{\max,2},\sigmapo,\xi$ 
is hidden.
This bound means that the convergence rate is characterized by the 
actual degree of freedom up to a log term.
That is, since the true tensor has rank $d^*$ and thus has a decomposition \eqref{eq:CPdecomp}, 
the number of unknown parameters is bounded by $d^*(M_1 + \dots  + M_K)$.
Thus, the rate is basically $O(\frac{\text{degree of freedom}}{n})$ (up to $\log$ order), which is optimal.
Here, we would like to emphasize that the true rank $d^*$ is unknown, but by placing a prior distribution 
on a rank the Bayes estimator can appropriately estimate the rank and gives an almost optimal rate 
depending on the true rank. 
In this sense, the Bayes estimator has adaptivity to the true rank.

More importantly, we do not assume {\it any strong convexity} on the design.
Usually, to derive a fast convergence rate of sparse estimators, such as Lasso and the trace norm regularization estimator,
we assume a variant of strong convexity, such as a restricted eigenvalue condition \citep{AS:Bickel+etal:2009}
and restricted strong convexity \citep{SS:Negahban+etal:2012}.
However, our convergence rate {\it does not} require such conditions.
This is a quite strong point in our analysis.
One reason why this is possible is that we are interested in the predictive accuracies rather than
the actual distance between the tensors $\|A-A^*\|_2^2$.
It is difficult to check the strong convexity condition in practice.
However, we do not need to consider this when we are interested only in the predictive accuracy.

\subsection{Out-of-sample predictive accuracy}
Next, we turn to the convergence rate of the out-of-sample predictive accuracy.
In this setting, the input sequence $\{X_i\}_{i=1}^n$ is not fixed,
but an i.i.d. random variable generated by a distribution $P(X)$.

To obtain fast convergence of the out-of-sample accuracy,
we need to bound the 
difference between the empirical and population $L_2$-errors: 
$\|A-A^*\|_n^2 - \|A-A^*\|_{\LPi}^2$.
To ensure that this quantity is small using Bernstein's inequality, 
$\max_X |\langle X,A \rangle|$ 
should be bounded. 
However, the infinity norm of the posterior mean could be large in tensor estimation.
This difficulty can be avoided by rejecting posterior sample $A$ with a large infinity norm.

\subsubsection{Rejection sampling with infinity norm thresholding}
Now, define $\|A\|_{\infty} = \max_{j_1,\dots,j_K}|A_{j_1,\dots,j_K}|$. 
Then, under Assumption \ref{ass:Xl1bound}, we have $\langle X ,A \rangle \leq \|A\|_{\infty}$.
Here, we assume that the infinity norm $\|A^*\|_{\infty}$ of the true tensor 
is approximately known, that is, we know $R > 0$ such that $2 \|A^*\|_{\infty} < R$.
This is usually true. For example, we know the upper bound in tensor completion for a recommendation system.
Otherwise, we may apply cross validation.
Our strategy is to reject the posterior samples with an infinity norm larger than $R$ 
during the sampling scheme.

The resultant distribution of rejection sampling is the conditional posterior distribution
$\Pi( \cdot | \|A\|_{\infty} \leq  R, D_n)$.
Accordingly, we investigate the out-of-sample accuracy of the conditional posterior:
\begin{align*}
&\EE_{D_n}\left[\int \|A-A^*\|_{\LPi}^2 \dd\Pi(A|\|A\|_{\infty} \leq R, D_n) \right].
\end{align*}
It should be noted that the conditional posterior is interpreted as the proper posterior distribution 
corresponding to the ``truncated'' prior the support of which is restricted to $\|A\|_{\infty} \leq R$.
\begin{Theorem}
Suppose Assumption \ref{ass:Xl1bound} and  $\|A^*\|_{\infty} < \frac{1}{2}R$ are satisfied,
then the out-of-sample accuracy is bounded as 
\begin{align*}
&
\EE_{D_n}\left[\int  \|A - A^* \|_{\LPi}^2 \dd\Pi(A|\|A\|_{\infty} \leq  R, D_n) \right] 
\notag \\
\leq
&
\frac{C(R^2\vee 1)}{n}
\Bigg[
d^*\left(\sum\nolimits_k M_k + \frac{3}{|\log(\xi)|}\right)\log(C_{n,K}) 
+ 
\frac{\Xi(\sqrt{\frac{\cepsilon}{n}})}{\cepsilon} +
\log(\dmax) + 
8^K (K+1)!
\Bigg],
\end{align*}
where $C$ is a universal constant.
\end{Theorem}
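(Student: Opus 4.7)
The plan is to piggy-back on Theorem~\ref{th:insamplerate} for the truncated posterior and then bridge from the empirical norm $\|\cdot\|_n$ to the population norm $\|\cdot\|_{\LPi}$ via an empirical-process argument that exploits the uniform sup-norm bound supplied by the truncation. The conditional distribution $\Pi(\cdot \mid \|A\|_\infty \leq R, D_n)$ is itself a proper posterior corresponding to the truncated prior $\pi\cdot \mathbf{1}[\|A\|_\infty \leq R]$, so the task reduces to (i)~verifying that the in-sample guarantee survives under truncation, and (ii)~controlling $\|A-A^*\|_{\LPi}^2 - \|A-A^*\|_n^2$ uniformly over the truncated support.

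For step~(i), I would observe that the prior-mass exponent $\Xi$ is essentially unchanged by the truncation: the small-ball lower bound of Lemma~1 is realized by configurations $\{U^{(k)}\}$ with explicitly bounded Frobenius norms (this is built into its Gaussian prior-mass calculation), and any such configuration yields a tensor with $\|A\|_\infty \leq R$ whenever $\|A^*\|_\infty < R/2$. Hence the truncated-prior version of $\Xi$ is bounded by the original one up to an additive $O(1)$, and the proof of Theorem~\ref{th:insamplerate} transfers almost verbatim to give
\[
\EE\!\left[\int \|A-A^*\|_n^2 \, \dd\Pi(A\mid \|A\|_\infty \leq R, D_n)\right] \leq \text{RHS of Theorem~\ref{th:insamplerate}}.
\]

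For step~(ii), under Assumption~\ref{ass:Xl1bound} and the truncation, every posterior sample $A$ and every input $X$ satisfy $|\langle X, A-A^*\rangle| \leq \|A-A^*\|_\infty \leq \|A\|_\infty + \|A^*\|_\infty \leq \tfrac{3}{2}R$, so the per-sample quantity $\langle X_i, A-A^*\rangle^2$ is bounded by $\tfrac{9}{4}R^2$ and has variance at most $\tfrac{9}{4}R^2\,\|A-A^*\|_{\LPi}^2$. Bernstein's inequality, combined with a covering argument over the set of CP-rank-$d$ tensors with bounded factors -- whose $\ell_\infty$ log-covering entropy at scale $\epsilon$ is of order $d(\sum_k M_k)\log(n/\epsilon)$ (up to polynomial factors in $R$ and the $M_k$ inside the logarithm) by a standard multilinear estimate -- then yields, uniformly with high probability over the truncated support,
\[
\|A-A^*\|_{\LPi}^2 \leq 2\|A-A^*\|_n^2 + C R^2\,\frac{d(\sum_k M_k)\log n}{n} + \frac{C R^2 \log \dmax}{n}.
\]
Integrating this inequality against $\dd\Pi(A\mid \|A\|_\infty\leq R,D_n)$ and combining with step~(i) produces the announced bound, with the $R^2\vee 1$ prefactor absorbing the Bernstein deviation term.

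The main obstacle is organizing this uniform concentration \emph{across all ranks} $1\leq d\leq \dmax$ and all $\|\cdot\|_{\LPi}$-scales simultaneously. The natural remedy is to peel the posterior into rank-shells and exploit the exponentially decaying rank prior $\xi^{d(\sum_k M_k)}$ to make the resulting series in $d$ summable; this is precisely where the constant $3/|\log \xi|$ appears in the conclusion in place of the $1/|\log \xi|$ of Theorem~\ref{th:insamplerate}. Within each rank, a slice-by-slice Bernstein argument over $\|\cdot\|_{\LPi}$-radius shells converts the uniform deviation into a term dominated either by the in-sample error or by the advertised $R^2/n$ remainder, after which the $\log \dmax$ cost appears through a union bound over the rank index.
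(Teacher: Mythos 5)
Your plan diverges from the paper's proof at the decisive step, and the divergence introduces a real gap. The paper never performs a uniform empirical-process argument over the truncated support: it writes the expected conditional posterior probability of the bad event $\{\|A-A^*\|_{\LPi} > \eta r \geq \sqrt{2}\|A-A^*\|_n\}$ as (lower-bounded denominator on $\calA_r$) $\times$ (numerator), takes $\EE_{Y_{1:n}|X_{1:n}}[p_{n,A}/p_{n,A^*}]=1$ to reduce the numerator to a prior integral, and then applies Fubini so that Bernstein's inequality is invoked \emph{pointwise for each fixed $A$} with respect to the randomness of $X_{1:n}$. No covering numbers, no peeling over ranks or radius shells, and no union bound appear; the cost is only $\exp(r^2+\Xi(1/\sqrt{n}))$, which is absorbed by the Gaussian tail $\exp(-n\eta^2 r^2/(8R^2))$. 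Your proposed alternative --- uniform concentration of $\|\cdot\|_n^2$ around $\|\cdot\|_{\LPi}^2$ over the truncated support via a covering argument --- cannot work as stated: the support $\{A:\|A\|_\infty\leq R\}$ is a full-dimensional $\ell_\infty$-ball whose metric entropy scales like $\prod_k M_k$, not $d\sum_k M_k$. To get the entropy you quote you must first restrict to the bounded-factor sets $\calF_{r,d}$ (whose radius $M_{r,d}$ itself depends on $\Xi$), show the conditional posterior puts negligible mass on $\calF_r^c$ (the $C_r$ mechanism of Theorem 1), and bound the complement's contribution to $\|\cdot\|_{\LPi}^2$ separately using $\|A-A^*\|_{\LPi}\leq\|A-A^*\|_\infty\leq 2R$. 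None of this is in your sketch, and it is precisely the machinery the paper's Fubini trick renders unnecessary.

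Two smaller inaccuracies: the reason truncation leaves $\Xi$ essentially unchanged is not that Lemma 1's configurations have bounded Frobenius norms (bounded factors only give $\|A\|_\infty\leq d^*(\|A^*\|_{\max,2}+\sigmapo)^K$, which need not be $\leq R$); it is that Lemma 1 lower-bounds the prior mass through the set $\{\|A-A^*\|_\infty<\epsilon r\}$, which lies inside $\{\|A\|_\infty\leq R\}$ once $\epsilon r\leq R/2$ and $\|A^*\|_\infty<R/2$ --- this is exactly the paper's inequality $-\log\Pi(\|A-A^*\|_n\leq\epsilon r,\ \|A\|_\infty\leq R)\leq\Xi(\tfrac{R}{2}\wedge\epsilon r)$. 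And the constant $3/|\log(\xi)|$ is not produced by a rank-peeling summability requirement; it is bookkeeping from collecting the $(d^*+2)/|\log(\xi)|$ terms already present in the in-sample bound together with the extra $\Xi(1/\sqrt{n})$ contributed by the term $III$.
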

The proof is given in Appendix \ref{sec:ProofTh2}.
The only difference between this and Theorem \ref{th:insamplerate} is that 
$R^2$ appears in front of the bound.
The source of this factor is the gap between the empirical and population $L_2$-norms.
Here again, the convergence rate can be simplified as 
\begin{align}
&\EE_{D_n}\left[\int  \|A - A^* \|_{\LPi}^2 \dd\Pi(A|\|A\|_{\infty} \leq  R, D_n) \right] \notag \\
\leq
& O\left(\frac{ d^*(M_1 + \dots  + M_K)}{n}(R^2 \vee 1)
\log\bigg(K\sqrt{n ({\textstyle \sum_{k=1}^K M_k})^K} \frac{\sigmapo^{K}}{\xi}\bigg) \right),
\end{align}
if $K!$ and $|\log(\xi)|$ are sufficiently smaller than $d^*(\sum_k M_k)$.
Here, we observe that the convergence rate achieved is optimal up to the $\log$-term.
We would like to emphasize again that 
the optimal rate is achieved, although 
we do not assume any strong convexity on the 
distribution $L_2(\Pi)$.
This can be so because we are not analyzing the actual $L_2$-norm $\|A-A^*\|_2$.
If we do not assume strong convexity like $\|A-A^*\|_2 \leq C \|A-A^*\|_{\LPi}$,
it is impossible to derive fast convergence of $\|A-A^*\|_2$.
The trick is that we focus on the ``weighted'' $L_2$-norm $\|A-A^*\|_{\LPi}$
instead of  $\|A-A^*\|_2$.

Finally, it is remarked that, if $X_i$ is the uniform at random observation in the tensor completion problem,
then $\|A-A^*\|_{\LPi}^2 = \frac{1}{\prod_{k}M_k}\|A-A^*\|_2^2$
(note that in this setting $\|X_i\|_1 = 1$).
Thus, our analysis yields fast convergence of the tensor recovery.
If $K=2$, the analysis recovers the well known rate of matrix completion problems
up to a $\log(nM_1 M_2)$ term
\citep{JMLR:Negahban+Wainwright:2012,SS:Negahban+etal:2012,AS:Rohde+Tsybakov:2011}:
$$
\frac{1}{M_1 M_2}\|\hat{A}-A^*\|_2^2  = O_p\left(\frac{d^*(M_1 + M_2)}{n}\log(nM_1 M_2)\right).
$$

\subsubsection{Rejection sampling with max-norm thresholding}
Finally, we briefly describe the convergence rate of the Bayes estimator based on 
the rejection sampling with respect to restricted {\it max-norm}.
We reject the posterior sample with a max-norm larger than $R$;
that is, we accept only a sample
$A_{\boldU}$ that satisfies 
$\boldU \in \{(U^{(1)},\dots,U^{(K)})
\mid \|U^{(k)}_{:,j}\| \leq R~(1\leq k \leq K, 1\leq j \leq M_k)\} =: \calU_R$.
Then, we have the following bound.
\begin{Theorem}
Under  Assumption \ref{ass:Xl1bound} and  $\|A^*\|_{\max,2}+\sigmapo < R$,
we have 
\begin{align*}
& 
\EE_{D_n}\left[\int \! \|A_{\boldU} - A^*\|_{\LPi}^2 \dd \Pi(A_{\boldU} 
| \boldU \in \calU_R, D_n)\right]  \\
\leq 
&C
\Bigg(
\frac{d^*(\sum_{k=1}^K M_k)}{n}
(1\vee R^{2K})
\log\left(K\sqrt{n}R^{\frac{K}{2}}\frac{\sigmapo^{K}}{\xi}\right)
\Bigg),
\end{align*}
where $C$ is a constant depending on $K,\log(\dmax),\sigmapo$.
\end{Theorem}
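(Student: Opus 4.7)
The plan is to reduce this theorem to essentially the argument behind Theorem 2 (the infinity-norm rejection bound), after first establishing that restricting posterior samples to $\calU_R$ automatically yields a uniform infinity-norm control on $A_{\boldU}$. The conditional posterior $\Pi(\cdot \mid \boldU \in \calU_R, D_n)$ may be viewed as the proper posterior for the prior truncated to $\calU_R$, so once the two key quantities (an infinity-norm bound and a prior mass estimate) are obtained, the Bayesian concentration machinery used for Theorem 2 transports with only cosmetic changes.

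First I would show that every $\boldU \in \calU_R$ satisfies $\|A_{\boldU}\|_{\infty}\leq R^K$. Writing $A_{j_1,\dots,j_K}=\sum_r \prod_{k=1}^{K} U^{(k)}_{r,j_k}$ and applying Cauchy--Schwarz iteratively (bounding one factor by $\|U^{(k)}_{:,j_k}\|_2\leq R$ and the remaining product by its $\ell_\infty$ norm over $r$, which is again bounded by powers of $R$) gives the claim. Combined with Assumption \ref{ass:Xl1bound}, this yields $|\langle X,A_{\boldU}\rangle|\leq R^K$ almost surely. This plays the role of the quantity $R$ in Theorem 2 when controlling the empirical-to-population $L_2$ gap via Bernstein's inequality, which is the sole source of the factor $R^{2K}$ in the final bound.

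Next I would upper bound the prior mass $\Xi_R(\delta):=-\log \Pi_R(A:\|A-A^*\|_n<\delta)$ associated with the truncated prior. The condition $\|A^*\|_{\max,2}+\sigmapo<R$ is exactly what is needed here: take any decomposition $\boldU^*=(U^{*(1)},\dots,U^{*(K)})$ of $A^*$ with all column norms at most $\|A^*\|_{\max,2}$, and restrict attention to Gaussian perturbations around $\boldU^*$ of size at most $\sigmapo$ in each column. Every such perturbation remains in $\calU_R$ by the triangle inequality, so the normalization constant of the truncated prior is bounded below by a constant, and the computation of Lemma 1 goes through with the same form (up to constants and the replacement $r/\sqrt{n}\to R^K/\sqrt{n}$ in the log factor, which produces the $\log(K\sqrt{n}\,R^{K/2}\sigmapo^K/\xi)$ term).

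With those two ingredients in hand, I would then replay the proof of Theorem 2 verbatim: combine the $\Xi_R$ bound with the Bayesian posterior contraction argument (yielding a fast rate in empirical $L_2$), and then transfer from empirical to population $L_2$ via a Bernstein-type peeling argument in which the uniform bound $R^K$ enters as the effective boundedness constant. The main obstacle I expect is purely bookkeeping: tracking how the truncation $\calU_R$ changes the normalizing constants of both the prior on $\boldU$ and the marginal prior on $d'$, and confirming that the lower bound on the truncated-prior normalization does not degrade the $\Xi$ estimate. Once that is handled, collecting constants and absorbing the $K$-dependent terms into the universal constant $C$ produces the stated bound.
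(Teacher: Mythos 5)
Your proposal follows essentially the same route as the paper's proof: the uniform bound $\|A_{\boldU}\|_\infty \le R^K$ for $\boldU\in\calU_R$ feeds the Bernstein step to produce the $R^{2K}$ factor, the condition $\|A^*\|_{\max,2}+\sigmapo<R$ guarantees that the small-ball prior-mass computation of Lemma~1 survives the truncation to $\calU_R$, and the remainder is the Theorem~1/Theorem~2 machinery (with the term corresponding to prior mass escaping the sieve now vanishing, since the truncated posterior puts no mass outside $\calU_R$). One small correction to your bookkeeping: the improved $\log\bigl(K\sqrt{n}R^{K/2}\sigmapo^K/\xi\bigr)$ factor arises from recomputing the packing number on the restricted set $\calU_R$ (effectively setting $M_{r,d}=R$ in the entropy bound, so $C_{n,K}=3K\sqrt{n}R^{K/2}$), not from the replacement you describe inside the prior-mass lemma.
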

The proof is given in Appendix \ref{sec:ProofTh3}.
In a setting where $M_k$ is much larger than $R$,
this bound gives a much better rate than the previous ones, 
because the term inside $\log$ is improved from $\prod M_k^{\frac{1}{2}}$ to $R^{\frac{K}{2}}$.
On the other hand, the rejection rate during the sampling would be increased.

\section{Related works}
In this section, we describe the existing works and clarify their relation to our work. 
Recently, theoretical analyses of convex regularized low-rank tensor estimators
have been developed.  
The pioneering work \citep{NIPS:Tomioka+etal:2011} analyzed
a method that utilizes unfolded matricization of a tensor.
Let $M = \prod_{k=1}^K M_k$ (the number of whole elements).
The authors used so-called {\it overlapped Schatten 1-norm} regularization  
$\sum_{k=1}^K \|A^{(k)}\|_{\Tr}$ where 
$\|\cdot\|_{\Tr}$ is the trace norm and $A^{(k)} \in \Real^{M_k \times M/M_k}$
 is the {\it mode-$k$ unfolding} of a tensor $A$ that is  a matrix obtained 
by unfolding the tensor with the $k$-th index fixed.
Their analysis assumes the true tensor $A^*$ has a low {\it Tucker-rank} \citep{Tucker:Psycometrika:1966}.
Tucker-rank is a general notion of CP-rank.
In this sense, their analysis is more general than ours.
However, strong convexity of the empirical and population $L_2$-norm is assumed.
Under this setting and $n=M$, the following bound is obtained:
\begin{align}
\label{eq:OverlapConvRate}
\frac{1}{M}\|\hat{A} - A^*\|_2^2 \leq C \frac{d^*}{n} 
{\textstyle \left(\frac{1}{K}\sum_{k=1}^K \sqrt{\frac{M}{M_k}} \right)^2}.
\end{align}
It can be seen that our bound $\frac{d^*(\sum_{k=1}^K M_k)}{n}\log(nM)$ is smaller than this bound;
in particular, if $M_k$ is large, the difference is significantly large.
In \cite{ICML:Mu+etal:2014}, it was shown that the bound 
\eqref{eq:OverlapConvRate} is tight and cannot be improved 
if the overlapped Schatten 1-norm is used.
In \cite{ICML:Mu+etal:2014}, a novel method called {\it square deal} was proposed and 
was shown to achieve the following rate. For $n=M$,
$$
\frac{1}{M}\|\hat{A} - A^*\|_2^2 \leq C \frac{d^*}{n} 
{\textstyle \left( \prod_{k \in I_1} M_k +  \prod_{k \in I_2} M_k \right)},
$$
where $I_1$ and $I_2$ are any disjoint decomposition of index set $\{1,\dots,K\}$.
This improves the rate \eqref{eq:OverlapConvRate}, but is still larger than the rate of the Bayes estimator,
because the product of $M_k$ appears instead of the sum.

Another study on the regularization approach was presented in \cite{NIPS:Tomioka+Suzuki:2013}.
The authors proposed using the {\it latent Schatten 1-norm}: 
$\inf_{\{A_k\}:A=\sum_{k=1}^K A_k}
\sum_{k=1}^K \|A_k^{(k)}\|_{\Tr}$, where $A_k^{(k)} \in \Real^{M_k \times M/M_k}$
is the mode-$k$ unfolding of the tensor $A_k$.
A nice point of this method is that it automatically finds the minimum rank direction,
that is, the mode-$k$ unfolding $A^{*(k)}$ with the minimum rank.
It was shown that the rate is 
$$
\frac{1}{M}\|\hat{A} - A^*\|_2^2 \leq C \frac{d^* \max_{k=1}^K\{M_k +  M/M_k\}}{n}.
$$
This rate is also larger than that of the Bayes estimator.
We would like to remark that this rate is obtained for the low ``Tucker-rank'' situation, which is more general than our
low CP-rank setting, and thus, it is not best suited to our situation. 
However, it is not apparent that the latent Schatten 1-norm achieves the same rate as that of 
the Bayes estimator in low CP-rank settings.

As for Bayesian counter part, 
a Bayesian low rank matrix estimator is analyzed in \cite{ALT:Alquier:2005}.
The prior in this study is similar to ours with $K=2$,
but, instead of placing prior on the rank, the authors placed a Gamma prior on the variance of the Gaussian prior.
They utilized the novel PAC-Bayes technique \citep{COLT:McAllester:1998,Book:Catoni:2004} to show
$$
\|\hat{A} - A^*\|_n^2 \leq C \frac{d^* (M_1 + M_2) \log(nM_1 M_2)}{n}.
$$
This work has a similar flavor to ours
in the sense that no strong convexity is required to obtain the convergence rate.
However, there are several points in which it differs from ours:
our analysis deals with general tensor estimation $(K\geq 3)$ and
gives a bound also on the out-of-sample predictive accuracy by utilizing the rejection sampling scheme,
and the posterior concentration is also given ($\int \|A-A^*\|_n^2\dd \Pi(A|D_n)$ instead of 
$\|\hat{A} - A^*\|_n^2$ where $\hat{A}$ is the posterior mean).
By virtue of the analysis of the out-of-sample predictive accuracy, our analysis is applicable to tensor completion
(and, of course, to matrix completion).
As for the Bayesian tensor estimator, in \cite{arXiv:Zhou+etal:BayesTensor:2013}  
a Bayes estimator of probabilistic tensors was investigated.
The model applies fully observed multinomial random variables 
and the rank is determined beforehand. 
Therefore, the setting is quite different from ours.

\vspace{-0.3cm}
\section{Numerical experiments}
\vspace{-0.3cm}

We now present numerical experiments to justify our theoretical results.
The problem is the tensor completion problem where 
each observation is a random selection of one element of $A^*$ 
with observational noise $N(0,1)$ (see Example \ref{ex:tensorcompletion}).
The true tensor $A^*$ 
was randomly generated such that
each element of $U^{(k)}~(k=1,\dots,K)$ was uniformly distributed on $[-1,1]$.
$\sigmapo$ was set at 5, and the true tensor was estimated by the posterior mean
obtained by the rejection sampling scheme with $R = 10$. 
The experiments were executed in five different settings, 
called settings 1 to 5: 
$\{(M_1,\dots,M_K),d^*\} = \{(10,10,10),4\}, 
\{(10,10,40),5\}, \{(20,20,30),8\}, 
\{(20,30,40),5\},
\{(30,30,40),6\}$.
For each setting, we repeated the experiments five times 
and computed the average of the in-sample predictive accuracy and
out-of-sample accuracy over all five repetitions.
The number of samples was chosen as $n = n_{\mathrm{s}} \prod_k M_k$,
where  $n_{\mathrm{s}}$ varied from 0.3 to 0.9.

In addition to the actual accuracy,
we considered the ``scaled'' accuracy, 
which is defined by 
$
\|\hat{A} - A^*\|_n^2 \times \left(\frac{\prod_k M_k}{d^* (\sum_k M_k)}\right);
$
the scaled out-of-sample accuracy is also defined in the same manner.
Figure \ref{fig:insample}
shows 
the in-sample accuracies and the scaled in-sample accuracies 
against the sample ratio $n_{\mathrm{s}}$.
The same plot for the out-of-sample accuracy is shown in Figure \ref{fig:outsample}.
It can be seen that the curves of the scaled accuracies in all settings are satisfactorily overlapped.
This means that our bound accurately describes the sample complexity of 
the Bayesian tensor estimator,
because according to our bounds the scaled accuracies should 
behave as $1/n_s$ up to a constant factor (and a $\log$ term).
The figures show that the scaling factor given by our theories 
is well matched to the actual predictive accuracy.

\begin{figure}[th]
\begin{minipage}{0.5\hsize}
\begin{center}
\includegraphics[width=7cm]{./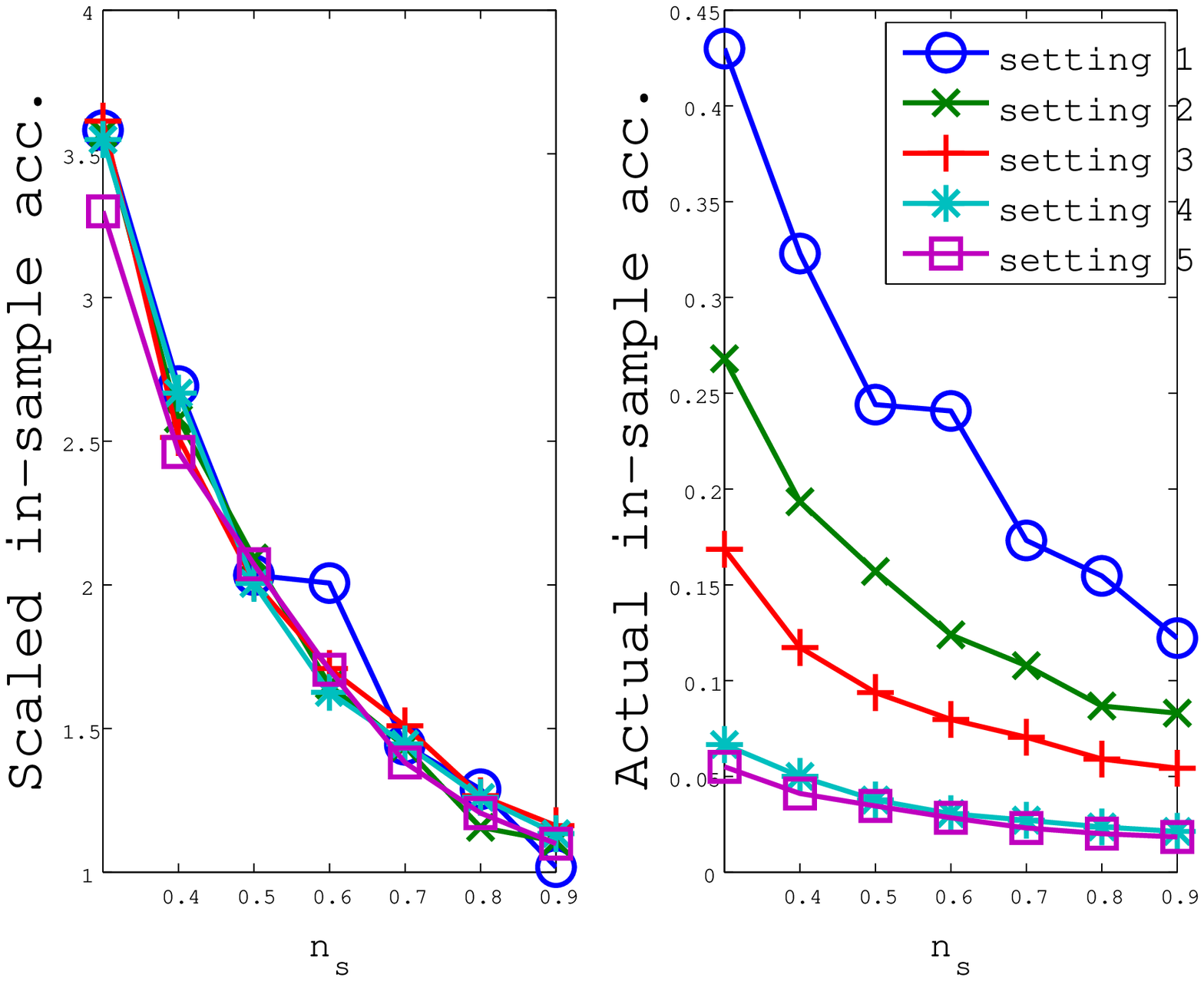}
\caption{Scaled in-sample accuracy (left) and actual in-sample accuracy (right)
versus $n_s$, averaged over five repetitions.}
\label{fig:insample}
\end{center}
\end{minipage}
~~~
\begin{minipage}{0.5\hsize}
\begin{center}
\includegraphics[width=7cm]{./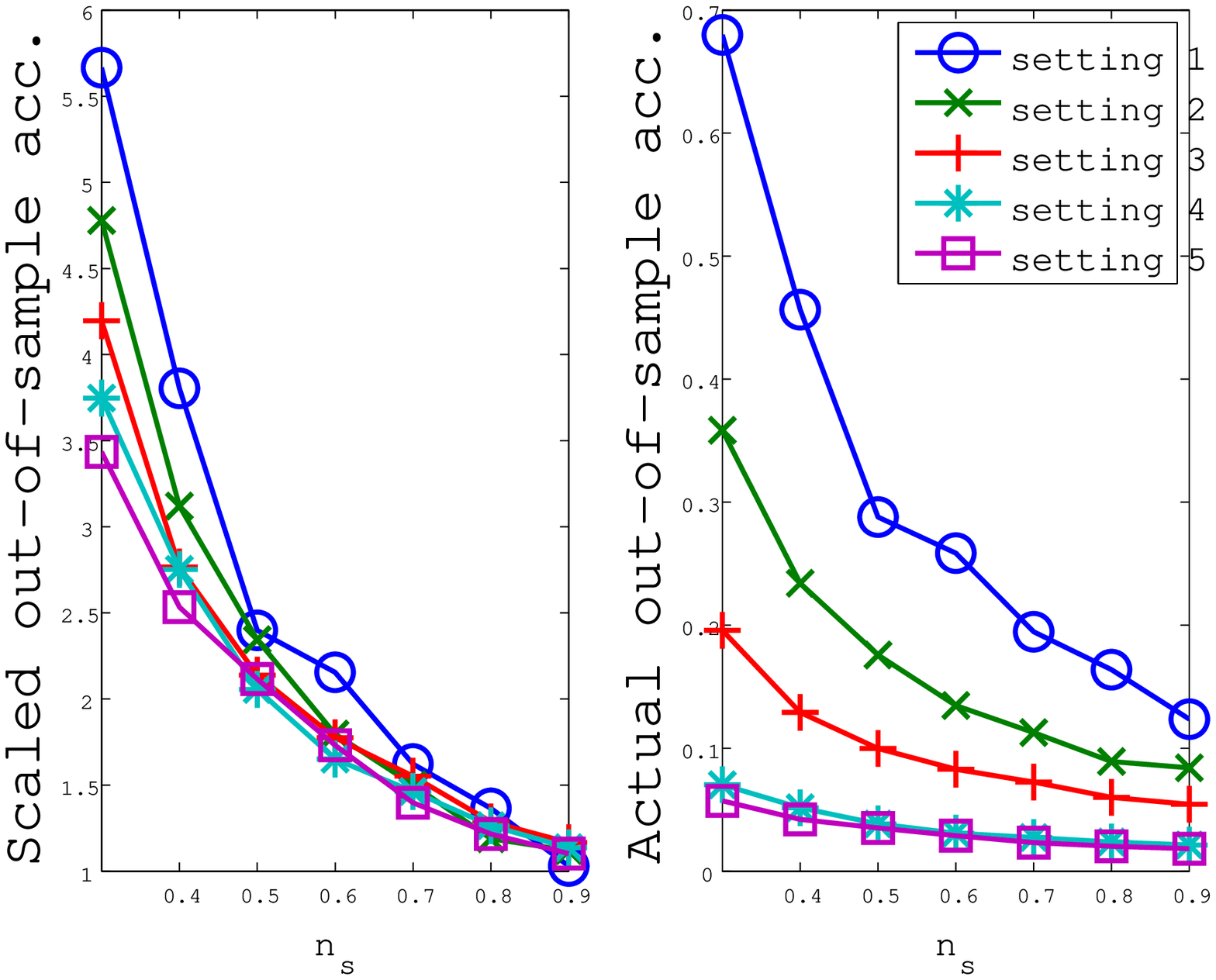}
\caption{Scaled out-of-sample accuracy (left) and actual out-of-sample accuracy
(right) versus $n_s$, averaged over five repetitions.}
\label{fig:outsample}
\end{center}
\end{minipage}
\end{figure}


\section{Conclusion and discussions}

In this paper, we investigated the statistical convergence rate of 
a Bayesian low rank tensor estimator.
The notion of a tensor's rank in this paper was based on CP-rank.
It is noteworthy that the predictive accuracy was bounded
{\it without} any strong convexity assumption.
Moreover, the obtained bound was (near) optimal
and automatically adapted to the unknown rank.
Numerical experiments showed that our theories indeed 
describe the actual behavior of the Bayes estimator.

Our bound includes the $\log$ term, which is not negligible when $K$ is large.
However, numerical experiments showed that the scaling factor without the $\log$ term explains well the actual behavior.
An investigation of whether the $\log$ term can be removed or not is 
an important future work. 


\section*{Acknowledgment}
We would like to thank Ryota Tomioka and Pierre Alquier for suggestive discussions.
This work was partially supported by MEXT Kakenhi 25730013 and JST-CREST.

\appendix 

\section{Proofs of Lemma 1 and Theorem 1}

Let $M_{r,d}$ be a positive number that will be determined later on.
Let $\calF_{r,d}$ be 
$$
\calF_{r,d} = \{A = [[U^{(1)},\dots,U^{(K)}]] \mid 
\max_{i,k} \|U_{:,i}^{(k)}\|_2 \leq \sqrt{M_{r,d}},~U^{(k)} \in \Real^{d \times M_{k}} \},
$$
and $\calF_{r}$ be $\cup_{d=1}^M \calF_{r,d}$.
$$
\|A^*\|_{\max,2} := \min_{\{U^{(k)}\}}\{ \max_{i,k} \|U^{(k)}_{:,i}\| \mid A^* = [[U^{(1)},\dots,U^{(K)}]],~U^{(k)} \in \Real^{d \times M_{k}} \}.
$$
$\sigmap{d^*}$ is denoted by $\sigmapo/\sqrt{d^*}$.

\subsection{Proof of Lemma 1}

For $d^*$ dimensional vectors $u^{(k)}~(k=1,\dots,K)$, let
$$
\langle u^{(1)},\dots,u^{(K)} \rangle := \sum_{i=1}^{d'} \prod_{k=1}^K u_i^{(k)}.
$$
Then for $u^{(k)},~u^{*(k)}~(k=1,\dots,K)$, we have that
\begin{align}
& |\langle u^{(1)},\dots,u^{(K)} \rangle - \langle u^{*(1)},\dots,u^{*(K)} \rangle| \notag \\
=
&
|\sum_{k=1}^{K} \langle u^{*(1)},\dots, u^{*(k-1)}, u^{(k)} - u^{*(k)}, u^{(k+1)}, \dots,u^{(K)} \rangle|  \notag \\
\leq 
&
\sum_{k=1}^K \|u^{*(1)}\| \cdots \|u^{*(k-1)}\|\|u^{(k)} - u^{*(k)}\| \|u^{(k+1)}\| \cdots \| u^{(K)} \|.
\label{eq:ukbounds}
\end{align}
Therefore, if $\|u^{(k)}\| \leq \|A^*\|_{\max,2} +\sigmapo$, 
$\|u^{(k)} - u^{*(k)}\| \leq \frac{\epsilon r}{K(\|A^*\|_{\max,2} +\sigmapo)^{K-1}}$ and 
$\|u^{*(k)}\| \leq \|A^*\|_{\max,2}$, 
then we have 
$$
|\langle u^{(1)},\dots,u^{(K)} \rangle - \langle u^{*(1)},\dots,u^{*(K)} \rangle|
\leq \epsilon r.
$$
Now, we consider two situations: 
(i) $\frac{\epsilon r}{K(\|A^*\|_{\max,2} +\sigmapo)^{K-1}} \leq \sqrt{d^*}\sigmap{d^*}$
and
(ii) $\frac{\epsilon r}{K(\|A^*\|_{\max,2} +\sigmapo)^{K-1}} \geq \sqrt{d^*}\sigmap{d^*}$.
(i) If $\frac{\epsilon r}{K(\|A^*\|_{\max,2} +\sigmapo)^{K-1}} \leq \sqrt{d^*}\sigmap{d^*}
(=\sigmapo)$ 
and $\|u^{*(k)}\| \leq \|A^*\|_{\max,2}$, then $\|u^{(k)}\| \leq \|A^*\|_{\max,2} +\sigmapo$.
Hence, by Lemma \ref{lemm:SmallBallProb}, we have that 
\begin{align*}
&-\log(\Pi(A:\|A-A^*\|_n < \epsilon r | d^*))
\leq 
-\log(\Pi(A:\|A-A^*\|_\infty < \epsilon r | d^*)) \\
\leq 
&
- \sum_{i_1,\dots,i_K}^{M_1,\dots,M_K} 
\log\left(\Pi ( U_{:,i_k}^{(k)} : \| U_{:,i_k}^{(k)} -  U_{:,i_k}^{*(k)}\| \leq 
{\textstyle \frac{\epsilon r}{K(\|A^*\|_{\max,2} +\sigmapo)^{K-1}}} )\right)   \\
\leq 
&
\sum_{k=1}^K \sum_{i_k=1}^{M_k}
\left\{
- d^* \log\left[ \frac{\epsilon r}{6\sigma_{\mathrm{p},d^*} \sqrt{d^*}
K(\|A^*\|_{\max,2} +\sigmapo)^{K-1}} \right] 
+ \frac{\|U_{:,i_k}^{*(k)}\|^2 }{2\sigma_{\mathrm{p},d^*}^2} \right\} \\
\leq 
&
- d^* (\sum_{k=1}^K M_k) \log\left[ \frac{\epsilon r}{6\sigma_{\mathrm{p},d^*} 
\sqrt{d^*} K(\|A^*\|_{\max,2} +\sigmapo)^{K-1}} \right] 
+ \frac{\sum_{k=1}^K  \|U^{*(k)}\|_{F}^2 }{2\sigma_{\mathrm{p},d^*}^2}.
\end{align*}
(ii) On the other hand,  if $\frac{\epsilon r}{K(\|A^*\|_{\max,2} +\sigmapo)^{K-1}} > 
\sigmap{d^*} \sqrt{d^*}$, 
we have that 
\begin{align*}
&-\log(\Pi(A:\|A-A^*\|_n < \epsilon r | d^*))
\leq 
- d^* (\sum_{k=1}^K M_k) \log\left( \frac{\sigmap{d^*}\sqrt{d^*}}{6\sigma_{\mathrm{p},d^*} \sqrt{d^*}} \right) 
+ \frac{\sum_{k=1}^K  \|U^{*(k)}\|_{F}^2 }{2\sigma_{\mathrm{p},d^*}^2}.
\end{align*}
Combining these inequality, we have that
\begin{align}
&-\log(\Pi(A:\|A-A^*\|_n < \epsilon r)) 
\leq 
-\log(\Pi(A:\|A-A^*\|_n < \epsilon r | d^*)) - \log(\pi(d^*)) \notag \\
\leq 
&
- d^* \left(\sum_{k=1}^K M_k \right)  \log\left[ \frac{1}{6} 
\left( {\textstyle \frac{\epsilon r}{\sigmapo K(\|A^*\|_{\max,2} +\sigmapo)^{K-1}}} \wedge 1\right) 
\right] 
+ \frac{\sum_{k=1}^K  \|U^{*(k)}\|_{F}^2 }{2\sigma_{\mathrm{p},d^*}^2} 
- \log(\pi(d^*)) \notag \\
\leq 
&
d^* \left(\sum_{k=1}^K M_k \right)  \log\left[ \frac{6}{\xi}
\left( {\textstyle \frac{\sigmapo^K K(\frac{\|A^*\|_{\max,2}}{\sigmapo}+1)^{K-1}}{\epsilon r}} \vee 1\right) \right] 
+ \frac{\sum_{k=1}^K  \|U^{*(k)}\|_{F}^2 }{2\sigma_{\mathrm{p},d^*}^2}. 
\label{eq:XiUpperBound1}
\end{align}
The last line is by the definition of 
the unnormalized prior $\pi(d)=\xi^{d(\sum_k M_k)}$.
Later on we let $\epsilon = 1/\sqrt{n}$.
This gives Lemma 1.


\subsection{Proof of Theorem 1}
\label{sec:ProofTh1}

In this section, we fix $X_{1:n}$ and think it is not a random variable.

To prove the theorem, we utilize the technique developed by \cite{JMLR:Vaart&Zanten:2011}.
Their technique is originally developed to show 
the posterior convergence of Gaussian process regression 
and is based on 
theories by \cite{AS:Ghosal+Ghosh+Vaart:2000} for the posterior convergence of 
non-parametric Bayes models.
Although our situation is of parametric model,
their technique is useful because ours is high dimensional singular model
in which  
a standard asymptotic statistics for parametric models does not work.

For a set of tensors $\calF_r$, an event $\calA_r$ and a test $\phi_r$
(all of which are dependent on a positive real number $r > 0$), it holds that,
for $\epsilon > 0$,
\begin{align}
 &\EE\left[\int \|A-A^*\|_n^2 \Pi (\dd A | Y_{1:n} ) \right] \notag \\
= &
\EE\left[32 \epsilon^2 \int_{r >0} r \Pi(\|A - A^*\|_n \geq 4 \epsilon r  | Y_{1:n} ) \dd r \right] \notag \\
\leq
&
32\epsilon^2 \int_{r >0} r^K 
\big\{ \EE\left [ \phi_r \right] 
+ 
P(\calA_r^c) \notag \\
&~~~~~~~~~~~~
+ 
\EE[(1-\phi_r) \boldone_{\calA_r} \Pi(A \in \calF_r^c | Y_{1:n}) ]
\notag \\
&
~~~~~~~~~~~~+ 
\EE[(1-\phi_r) \boldone_{\calA_r} \Pi(A \in \calF_r: \|A-A^*\|_n^2 \geq 4 \epsilon r^{2K}| Y_{1:n}) ] 
\big\} \dd r^K \notag \\
=:&
32\epsilon^2 \int_{r >0} r^K 
(A_r + B_r + C_r + D_r) \dd r^K.
\label{eq:decompIntABCD}
\end{align}
We give an upper bound of $A_r$, $B_r$, $C_r$ and $D_r$ in the following.

{\it Step 1.}
The probability distribution of $Y_{1:n}$ with a true tensor $A$ (that means $Y_i = \langle X_i, A\rangle + \epsilon_i$)
is denoted by $P_{n,A}$.
The expectation of a function $f$ with respect to $P_{n,A}$ is denoted by $P_{n,A}( f)$.

For arbitrary $r' >0$, define $C_{j,r',d} = \{ A \in \calF_{r,d} \mid jr' \leq 
\sqrt{n} \|A-A^*\|_n \leq (j+1)r' \}$. 
We construct a maximum cardinality set $\Theta_{j,r',d} \subset C_{j,r',d}$ such that 
each $A,A'\in \Theta_{j,r',d}$ satisfies $\sqrt{n}\|A-A'\|_n \geq jr'/2$.
The cardinality of $\Theta_{j,r',d}$ is equal to $D(jr'/2,C_{j,r',d},\sqrt{n}\|\cdot\|_n)$
\footnote{For a normed space $\calF$ attached with a norm $\|\cdot\|$,
the $\epsilon$-packing number is denoted by $D(\epsilon,\calF,\|\cdot\|)$.}.
Then, one can construct a test $\phi_{j,d}$ such that 
\begin{align*}
& P_{n,A^*} \phi_{j,d} \leq D(jr'/2,C_{j,r,d},\sqrt{n}\|\cdot\|_n) e^{-\frac{1}{2}(\frac{jr'}{2} + q)^2}
\leq D(jr'/2,C_{j,r,d},\sqrt{n}\|\cdot\|_n) e^{-\frac{1}{8}j^2r'^2 - \frac{1}{2}q^2}, \\
& \sup_{A\in C_{j,r',d}} P_{n,A}(1-\phi_{j,d}) \leq e^{-\frac{1}{2}\max(\frac{jr'}{2} - q,0)^2}
\leq e^{-\frac{j^2r'^2}{16} + \frac{q^2}{2}},
\end{align*}
for any $q > 0$~(see \cite{JMLR:Vaart&Zanten:2011} for the details ). 
For each $d$, we construct a test $\phi_d$ as $\phi_d = \max_j \phi_{j,d}$.
Then we have 
\begin{align*}
 P_{n,A^*} \phi_d 
&\leq \sum_{j \geq 1} D(jr'/2,C_{j,r',d},\sqrt{n}\|\cdot\|_n) e^{-\frac{1}{8}j^2r'^2- \frac{1}{2}q^2} 
\leq  \sum_{j \geq 1} D(r'/2,\calF_{r,d},\sqrt{n}\|\cdot\|_n) e^{-\frac{1}{8}j^2r'^2- \frac{1}{2}q^2} \\
&\leq 9 D(r'/2,\calF_{r,d},\sqrt{n}\|\cdot\|_n) e^{-\frac{1}{8}r'^2 - \frac{1}{2}q^2} .
\end{align*}
Here, by setting  $\frac{1}{2}q^2 = \log(D(r'/2,\calF_{r,d},\sqrt{n}\|\cdot\|_n))$, we have 
\begin{align*}
P_{n,A^*} \phi_d & \leq 9 e^{-\frac{1}{8}r'^2 }, \\
\sup_{A\in \calF_{r,d}} P_{n,A}(1-\phi_{d}) & \leq e^{-\frac{1}{16}j^2r'^2 + 
\log(D(r'/2,\calF_{r,d},\sqrt{n}\|\cdot\|_n))}.
\end{align*}

Finally, we construct a test $\phi$ as the maximum of $\phi_{d}$, that is, $\phi = \max_{d \geq 1} \phi_{d}$. Then,
we have
\begin{align*}
P_{n,A^*} \phi &
\leq 9 e^{-\frac{1}{8}r'^2  + \log(\dmax)} \\
\sup_{A\in \calF_{r,d}} P_{n,A}(1-\phi) & 
\leq e^{-\frac{1}{16}r'^2 + \log(D(r'/2,\calF_{r,d},\sqrt{n}\|\cdot\|_n)) },
\end{align*}
for all $d \geq 1$.

Substituting $4 \sqrt{n} \epsilon r^K$ into $r'$,
we obtain 
\begin{align}
P_{n,A^*} \phi &
\leq 9 e^{-2 n \epsilon^2 r^{2K} + \log(\dmax)} \\
\sup_{A\in \calF_{r,d}} P_{n,A}(1-\phi) & 
\leq e^{-n \epsilon^2 r^{2K} + \log(D(r'/2,\calF_{r,d},\sqrt{n}\|\cdot\|_n)) }.
\label{eq:testOuterBound}
\end{align}

We define 
$$
A_r = P_{n,A^*} \phi.
$$
From now on, we denote by $\phi_r$ the test constructed above to indicate that the test is associated to 
a specific $r$.

{\it Step 2.}

By Lemma 14 of \cite{JMLR:Vaart&Zanten:2011} and its proof, one can show that, for any $r > 0$, 
$$
P\left[
\int \frac{p_{n,A}}{p_{n,A^*}} \dd \Pi(A) \geq e^{-\frac{n\epsilon^2 r^2}{2}-
\sqrt{n}\epsilon rx}\Pi(A :\|A-A^*\|_n < \epsilon r)
\right]
\leq e^{-\frac{x^2}{2}}.
$$
Therefore, there exists an even $\mathcal{A}_{1,r}$ such that  
$$
P_{A^*}(\mathcal{A}_{1,r}^c) \leq e^{-n\epsilon^2 r^{2}/8},
$$
and, on the event $\calA_{1,r}$, it holds that
$$
\int \frac{p_{n,A}}{p_{n,A^*}} \dd \Pi(A) \geq e^{-n\epsilon^2 r^2} 
\Pi(A:\|A-A^*\|_n < \epsilon r).
$$

Moreover, it can be checked in a similar way that there exists an event $\calA_{2,r}$ such that, 
for a some fixed constant $\cepsilon < \frac{1}{2}$ (which will be determined later),
\begin{align}
P_{A^*}(\mathcal{A}_{2,r}^c) \leq \exp\left\{-\frac{1}{2}\left(
\frac{\sqrt{\cepsilon} \sqrt{n} \epsilon r^{K}}{2} 
+ \frac{\log(\Pi(A:\|A-A^*\|_n < \sqrt{\cepsilon}\epsilon r^K))}{\sqrt{\cepsilon} \sqrt{n} \epsilon r^{K}}
\vee 0 \right)^2\right\},
\label{eq:A2prob}
\end{align}
and, on the event $\calA_{2,r}$, it holds that
$$
\int \frac{p_{n,A}}{p_{n,A^*}} \dd \Pi(A) \geq e^{-\cepsilon n\epsilon^2 r^{2K}}.
$$
Here, we note that, by a simple calculation, the RHS of \Eqref{eq:A2prob} is bounded by 
$$
e^{-\frac{1}{16}\cepsilon n \epsilon^2 r^{2K}},
$$
if  $\cepsilon n \epsilon^2 r^{2K} \geq - 8\log(\Pi(A:\|A-A^*\|_n < \sqrt{\cepsilon} \epsilon r^{K}))$.

Now, define $\calA_r = \calA_{1,r} \cap \calA_{2,r}$, then we have 
$$
B_r = P_{A^*}(\calA_r^c) \leq e^{-n\epsilon^2 r^{2}/8} + 
\exp\left\{-\frac{1}{2}\left(
\frac{\sqrt{\cepsilon} \sqrt{n} \epsilon r^{K}}{2} 
+ \frac{\log(\Pi(A:\|A-A^*\|_n < \sqrt{\cepsilon}\epsilon r^K))}{\sqrt{\cepsilon} \sqrt{n} \epsilon r^{K}}
\vee 0 \right)^2\right\}.
$$

{\it Step 3.}

Since 
\begin{align*}
&\{\{U^{(k)}\}_{k=1}^K \in \Real^{d \times M_1} \times \dots \times  \Real^{d \times M_K}
\mid  
\max_{k,i}\|U_{:,i}^{(k)}\|_2 \geq \sqrt{M_{r,d}} \} \\
\subseteq 
&\{\{U^{(k)}\}_{k=1}^K \in \Real^{d \times M_1} \times \dots \times  \Real^{d \times M_K}
\mid  
\sum_{k=1}^K \|U^{(k)}\|_{2}^2 \geq M_{r,d} \},
\end{align*}
Proposition~\ref{lemm:TailBoundChi2} yields the bound of the prior probability of $\calF_{r}^c$ as 
\begin{align*}
\Pi(\calF_{r}^c) = \sum_{d=1}^M \Pi(\calF_{r,d}^c) \pi(d) \leq \sum_{d=1}^M 
\exp\left[\frac{d\sum_k M_k}{2} + \frac{d\sum_k M_k}{2}\log\left(\frac{M_{r,d}}{d \sum_k  M_k \sigmap{d}^2}\right) - \frac{M_{r,d}}{2\sigmap{d}^2}  \right] \pi(d).
\end{align*}
Therefore, its posterior probability in the event $\calA_r$ is bounded as 
\begin{align*}
C_{r} 
&=P_{n,A^*}[ \Pi(\calF_{r}^c|Y_{1:n})\boldone_{\calA_r}(1-\phi_r)] \\
&\leq
\sum_{d=1}^M 
\exp\left[n \epsilon^2 r^2 + \Xi(\epsilon r) + \frac{d\sum_k M_k}{2} + \frac{d\sum_k M_k}{2}
\log\left(\frac{M_{r,d}}{d \sum_k  M_k \sigmap{d}^2}\right) - \frac{M_{r,d}}{2\sigmap{d}^2} \right] \pi(d).
\end{align*}

{\it Step 4.}

Here, $D_r$ is evaluated.
Remind that $D_r$ is defined as  
$$
D_r = P_{n,A^*}[\Pi(A \in \calF_{r} : \|A - A^*\|_n > 4 \epsilon r^{K} | Y_{1:n})(1-\phi_r)\boldone_{\calA_r}].
$$
Since $\calA_r = \calA_{1,r} \cup \calA_{2,r} \supseteq \calA_{2,r}$, we have 
\begin{align*}
D_r 
&\leq   P_{n,A^*}\left[ \sum_{d} \int_{A \in \calF_{r,d} : \|A - A^*\|_n > 4 \epsilon r^{K}}  p_{n,A}/p_{n,A^*}  \dd  \Pi(A|d) 
\exp(\cepsilon n \epsilon^2 r^{2K}) 
\pi(d) (1-\phi_r)\boldone_{\calA_r} \right] \\
&=   \sum_d \int_{A \in \calF_{r,d} : \|A - A^*\|_n > 4 \epsilon r^{K}}  P_{n,A}[(1-\phi_r)\boldone_{\calA_r}] \exp(\cepsilon n \epsilon^2 r^{2K}) 
 \dd  \Pi(A|d) \pi(d).
\end{align*}
Therefore, using $P_{n,A}[(1-\phi_r)\boldone_{\calA_r}] \leq 1$, the summand in the RHS is bounded by 
\begin{align}
\pi(d)\exp(\cepsilon n \epsilon^2 r^{2K}).
\label{eq:DrSummandFirstBound}
\end{align}
Simultaneously, \Eqref{eq:testOuterBound} gives another upper bound of the summand as 
\begin{align}
& \int_{A \in \calF_{r,d} : \|A - A^*\|_n > 4 \epsilon r^{K}}  
e^{\cepsilon n \epsilon^2 r^{2K} - n \epsilon^2 r^{2K} + \log(D(2\sqrt{n}\epsilon r^K,\calF_{r,d},\sqrt{n}\|\cdot\|_n)} 
 \dd  \Pi(A|d) \pi(d)   \notag \\
\leq 
& 
 \pi(d) \exp\left\{ - \frac{1}{2}n\epsilon^2 r^{2K} + \log(D(2\sqrt{n}\epsilon r^K,\calF_{r,d}),\sqrt{n}\|\cdot\|_n))\right\},
\label{eq:DrSummandSecondBound}
\end{align}
for $r\geq 1$, where we used $\cepsilon < \frac{1}{2}$.

We evaluate the packing number $\log(D(2\sqrt{n}\epsilon r^K ,\calF_{r,d},\sqrt{n}\|\cdot\|_n))$.
It is known that the packing number of unit ball in $d$-dimensional Euclidean space is bounded by
$$
D(\epsilon, B_d(1), \|\cdot\|) \leq \left( \frac{4+\epsilon}{\epsilon} \right)^d.
$$
Here $B_d(R)$ denotes the ball with the radius $R$ in $d$-dimensional Euclidean space.

Similar to \Eqref{eq:ukbounds}, the $L_2$-norm between two tensors $A=[[U^{(1)},\dots,U^{(K)}]]$
and $A'=[[U'^{(1)},\dots,U'^{(K)}]]$
can be bounded by 
\begin{align}
\|A - A'\|_2^2 = 
& \sum_{i_1,i_2,\dots,i_K}
(\langle U_{:,i_1}^{(1)},\dots,U_{:,i_K}^{(K)} \rangle - \langle U_{:,i_1}'^{(1)},\dots,U_{:,i_K}'^{(K)} \rangle)^2 \notag \\
=
&
\sum_{i_1,i_2,\dots,i_K}
\left(\sum_{k=1}^K \langle U_{:,i_1}^{(1)},\dots,U_{:,i_{k-1}}^{(k-1)}, U_{:,i_{k}}^{(k)} -  U_{:,i_k}'^{(k)},
\dots,U_{:,i_K}'^{(K)} \rangle\right)^2 \notag \\
\leq
&
K
\sum_{i_1,i_2,\dots,i_K}
\sum_{k=1}^K 
\langle U_{:,i_1}^{(1)},\dots,U_{:,i_{k-1}}^{(k-1)}, U_{:,i_{k}}^{(k)} -  U_{:,i_k}'^{(k)},
U_{:,i_{k+1}}'^{(k+1)}, \dots,U_{:,i_K}'^{(K)} \rangle^2 \notag \\
\leq 
&
K
\sum_{k=1}^K
\sum_{i_1,i_2,\dots,i_K}
\| U_{:,i_1}^{(1)} \|_2^2 \times \cdots \times \|U_{:,i_{k-1}}^{(k-1)}\|_2^2 
\times
\|U_{:,i_{k}}^{(k)} -  U_{:,i_k}'^{(k)}\|_2^2
\times
\|U_{:,i_{k+1}}'^{(k+1)}\|_2^2
\times
\cdots 
\times
\|U_{:,i_K}'^{(K)}\|_2^2 \notag \\
\leq
&
K
\sum_{k=1}^K
\| U^{(1)} \|_2^2 \times \cdots \times \|U^{(k-1)}\|_2^2 
\|U^{(k)} -  U'^{(k)}\|_2^2 
\times
\|U'^{(k+1)}\|_2^2
\times
\cdots 
\times
\|U'^{(K)}\|_2^2.
\notag 
\end{align}
If $A,A' \in \calF_{r,d}$, then the RHS is further bounded by 
\begin{equation}
\|A - A'\|_2^2 \leq 
K M_{r,d}^{K-1} \sum_{k=1}^K \|U^{(k)} -  U'^{(k)}\|_2^2.
\label{eq:AL2bound}
\end{equation}
Thus, if $2\epsilon r^K \leq KM_{r,d}^{K/2}$, using the relation \eqref{eq:AL2bound}
and $\sqrt{n}\|A-A'\|_n \leq \sqrt{n}\|A-A'\|_\infty \leq \sqrt{n}\|A-A'\|_2$, 
we have that 
\begin{align}
&\log(D(2\sqrt{n}\epsilon r^K,\calF_{r,d},\sqrt{n}\|\cdot\|_n)) 
\leq
\log(D(2 \epsilon r^K,\calF_{r,d}, \|\cdot\|_2))  \notag \\ 
&
\leq
\log(D(2\epsilon r^K/(KM_{r,d}^{(K-1)/2}),B_{d\sum_{k} M_k}(\sqrt{M_{r,d}}),\|\cdot\|_2)) 
\notag \\
&\leq 
d\left(\sum_{k=1}^K M_k\right) \log\left(\frac{4+\frac{2\epsilon r^K}{KM_{r,d}^{K/2}}}{\frac{2\epsilon r^K}{KM_{r,d}^{K/2}}}\right) 
\leq 
d(\sum_{k=1}^K M_k) \log\left(\frac{3KM_{r,d}^{K/2}}{\epsilon r^K}\right),
\label{eq:DrCoveringBound}
\end{align}
otherwise, $\log(D(2\sqrt{n}\epsilon r^K,\calF_{r,d})) = 0$.

Combining \Eqrefs{eq:DrSummandFirstBound} and \eqref{eq:DrSummandSecondBound} with \Eqref{eq:DrCoveringBound}  
results in the following upper bound of $D_r$:
\begin{align}
&D_r 
\leq  
\sum_d  \pi(d)
\min\left\{
\exp(\cepsilon n \epsilon^2 r^{2K}),
 \exp\left[ - \frac{1}{2}n\epsilon^2 r^{2K} + d\left(\sum_{k=1}^K M_k\right) \log\left(\frac{3KM_{r,d}^{K/2}}{\epsilon r^K}\right)\right]
\right\}
\label{eq:DrFinalBound}
\end{align}
for $r \geq 1$.

{\it Step 5.}

Here, we establish the assertion by combining the bounds of $A_r,~B_r,~C_r$ and $D_r$ obtained above.
Set $\epsilon = \frac{1}{\sqrt{n}}$ and $M_{r,d} = \frac{4}{\min(d^*,d)} \Xi(1/\sqrt{n})\sigmapo^2 r^2$. 
Then, we have that, for all $d \geq d^*$, 
$$
d\left( \sum_{k=1}^K M_k \right) \log\left(\frac{3KM_{r,d}^{K/2}}{\epsilon r^K} \right)
\leq
d\left( \sum_{k=1}^K M_k \right) \log\left(3K \sqrt{n} \left(\frac{4\sigmapo^2 \Xi(\frac{1}{\sqrt{n}})}{d^*}
\right)^{\frac{K}{2}} \right).
$$
Recall that $C_{n,K} = 3K \sqrt{n} \left(\frac{4 \sigmapo^2 \Xi(\frac{1}{\sqrt{n}})}{d^*} \right)^{\frac{K}{2}} $,
and $\cepsilon = \min\{ |\log(\xi)|/\log(C_{n,K}),1 \}/4$.
Let $\tilde{r}_d$ be such that 
$-\frac{1}{2}\tilde{r}_d^{2K} + d\left(\sum_{k=1}^K M_k\right)\log(C_{n,K})  = 0$, 
that is, 
$\tilde{r}_d = 2^{1/2K}[d(\sum_k M_k) \log(C_{n,K})]^{1/2K}$.

By the upper bound \eqref{eq:decompIntABCD}, we have that 
\begin{align*}
&\EE\left[\int \|A-A^*\|_n^2 \dd\Pi(A|Y_{1:n}) \right] 
\leq 
32\epsilon^2 + 32\epsilon^2
\int_{r>1} r^K(A_r + B_r + C_r + D_r) \dd r^K.
\end{align*}

We are going to bound each term in the integral.
\begin{align*}
\int_{r>1} r^K A_r \dd r^K 
\leq 
\int_{r>1}
r^K 
\min\{9\exp(-2r^{2K} + \log(\dmax)),1\} \dd r^K \leq \frac{\log(\dmax)}{2} + \frac{9}{4}.
\end{align*}

By Lemma \ref{lemm:acxKintegral}, the integral related to $B_r$ is bounded by
\begin{align*}
\int_{r>1} r^K B_r \dd r^K 
&\leq 
8\Xi(\sqrt{\cepsilon} \epsilon)/\cepsilon
+
\int_{r^K>\sqrt{8\Xi(\sqrt{\cepsilon} \epsilon)/\cepsilon}}
r^K
\left[
\exp(- r^{2}/8) 
+
\exp(-\cepsilon r^{2K}/16)\right] \dd r^K  \\
&\leq 
8\frac{\Xi(\sqrt{\cepsilon} \epsilon)}{\cepsilon} + 
\int_{r^K > 1} r^K \exp(-r^2/8) \dd r^K
\\
&~~~~+
\frac{1}{32} \exp\left(-\frac{\Xi(\sqrt{\cepsilon} \epsilon)}{2\cepsilon}\right).
\end{align*}
Here, the second term of the RHS is bounded by 
\begin{align*}
&\int_{r^K > 1} r^K \exp(-r^2/8) \dd r^K
\leq \frac{1}{2} \sum_{j=1}^K 8^j \frac{K!}{(K-j)!} \exp\left(-\frac{1}{8}\right) 
\leq 8^K (K+1)!.
\end{align*}

Similarly, by Lemma \ref{lemm:acxKintegral}, the integral related to $C_r$ is bounded by
\begin{align*}
\int_{r>1} r^K C_r \dd r^K 
&
\leq 
\int_{r>1}
r^K
\sum_{d=1}^M \exp\left[ -\frac{d}{\min(d,d^*)} \Xi(\epsilon)r^2 \right] \pi(d) \dd r^K  \\
&
\leq
\frac{1}{2}\sum_{j=0}^{K-1} \frac{K \cdots (K-j)}{\Xi(1/\sqrt{n})^{j+1}} \leq K.
\end{align*}

Next, we bound the integral corresponding to $D_r$.
By the definition of $\cepsilon$ and $\tilde{r}_d$, 
for all $r \leq \tilde{r}_d$, it holds that
$$
\log[\pi(d)\exp(\cepsilon r^{2K})]
\leq d'(\sum_k M_k) \log(\xi) + \cepsilon \tilde{r}_{d}^{2K} 
\leq 
d(\sum_k M_k) \log(\xi)/2,
$$
(remind that we are using unnormalized prior $\pi(d)$).
On the other hand, 
for all $r \geq \tilde{r}_d$,
it holds that 
$$
\exp\left\{ - n\epsilon^2 r^{2K}/2 + 
d\left(\sum_{k=1}^K M_k\right) \log\left(\frac{3KM_{r,d}^{K/2}}{\epsilon r}\right) \right\}
\leq 
\exp\left[ - (r^{2K} - \tilde{r}_d^{2K})/2 \right].
$$
Therefore, \Eqref{eq:DrFinalBound} becomes 
\begin{align}
& 
\int_{r>1} r^K D_r \dd r^K  
=
\int_{r=1}^{\tilde{r}_{d^*}} r^K D_r \dd r^K  
+ 
\int_{r > \tilde{r}_{d^*}} r^K D_r \dd r^K  
\notag \\
\leq
&
\frac{1}{2}\tilde{r}_{d^*}^{2K}
+
\sum_{d=1}^{d^*-1} 
\pi(d) \int_{\tilde{r}_{d^*} \leq r} r^K \exp(- (r^{2K} - \tilde{r}_{d}^{2K})) \dd r^K  \notag \\
&+
\sum_{d=d^*}^{M}
\Big\{
\int_{\tilde{r}_{d^*} \leq r \leq \tilde{r}_{d}} r^K  \dd r^K  \exp\left[ \frac{1}{2}d(\sum\nolimits_k M_k) \log(\xi)\right] 
+
\pi(d) \int_{\tilde{r}_{d} \leq r} r^K \exp\left[- (r^{2K} - \tilde{r}_{d}^{2K})\right] \dd r^K  
\Big\}
\label{eq:DrIntFirstBound}
\end{align}
Here the second term of the RHS can be evaluated as
\begin{align*}
&\sum_{d=1}^{d^*-1} 
\pi(d) \int_{\tilde{r}_{d^* \leq r}} r^K \exp(- (r^{2K} - \tilde{r}_{d}^{2K})) \dd r^K  
\leq 
\sum_{d=1}^{d^*-1} 
\pi(d).
\end{align*}
The third term is evaluated as 
\begin{align*}
&\sum_{d=d^*}^{M}
\int_{\tilde{r}_{d^*} \leq r \leq \tilde{r}_{d}} r^K  \dd r^K  \exp\left[ \frac{1}{2}d\left(\sum\nolimits_k M_k\right) \log(\xi)\right] \\
\leq
&
\sum_{d=d^*}^{M}
\frac{1}{2}\tilde{r}_{d}^{2K} \exp\left[ \frac{1}{2}d\left(\sum\nolimits_k M_k\right) \log(\xi)\right] \\
=
&
\sum_{d=d^*}^{M}
d(\sum\nolimits_k M_k)\log(C_{n,K}) \exp\left[ \frac{1}{2}d\left(\sum_k M_k\right) \log(\xi)\right] \\
\leq
&
\int_{d \geq d^*}  (d+1)(\sum\nolimits_k M_k)\log(C_{n,K}) 
\exp\left[ \frac{1}{2}d\left(\sum_k M_k\right) \log(\xi)\right] \dd d \\
\leq
&
\frac{2\sum_k M_k(d^*+2)}{\sum\nolimits_k M_k |\log(\xi)|}\log(C_{n,K})
\exp\left[\frac{d^*(\sum_k M_k)}{2}\log(\xi)\right] \\
\leq 
&
\frac{2(d^*+2)}{|\log(\xi)|}\log(C_{n,K})
\end{align*}
The fourth term is bounded in a similar way to the second term as
\begin{align*}
\sum_{d=d^*}^{M}
\pi(d) \int_{\tilde{r}_{d} \leq r} r^K \exp(- (r^{2K} - \tilde{r}_{d}^{2K})) \dd r^K  
\leq \sum_{d=d^*}^{M}\pi(d).
\end{align*}
Thus \Eqref{eq:DrIntFirstBound} is upper bounded by 
\begin{align*}
&\int_{r \geq 1} r^K D_r \dd r^K  \leq 
\frac{1}{2}\tilde{r}_{d^*}^{2K} + 2 +  
\frac{2(d^*+2)}{|\log(\xi)|}\log(C_{n,K}) \\
=&
d^* \left(\sum\nolimits_k M_k\right)\log(C_{n,K}) 
+ 2 +  
\frac{2(d^*+2)}{|\log(\xi)|}\log(C_{n,K}).
\end{align*}

Combining all inequalities
yields that there exists a universal constant $C$
such that
\begin{align}
&\EE\left[\int\|A-A^*\|_n^2 \dd \Pi(A|Y_{1:n})\right] \notag \\
\leq 
&
\frac{C}{n}
\Bigg(
\frac{\log(\dmax)}{2} + K + 
8\frac{\Xi(\sqrt{\frac{\cepsilon}{n}})}{\cepsilon} + 
8^K (K+1)!
+
d^*\left(\sum\nolimits_k M_k\right)\log(C_{n,K}) \notag \\
&+ 
\frac{2(d^*+2)}{|\log(\xi)|}\log(C_{n,K})
+ 1
\Bigg).
\label{eq:AAstarIntBound}
\end{align}

\section{Rejection Sampling (Proof of Theorem 2)}
\label{sec:ProofTh2}

In this section, we prove Theorem 2. 
By assumption, we have $R \geq 2 \|A^*\|_{\infty}$ and $\frac{R}{2} \geq 1/\sqrt{n}$.
Therefore, if $\epsilon r \geq 1/\sqrt{n}$, we have that 
\begin{align}
&-\log(\Pi(A: \|A-A^*\|_n \leq \epsilon r, \|A\|_{\infty} \leq R))
\leq \Xi(\frac{R}{2} \wedge \epsilon r)  \leq \Xi(1/\sqrt{n}) .
\label{eq:XiBound2}
\end{align}
Now, for any non-negative measurable  function $f:\Real^{M_1\times \dots \times M_K}$, we have  that 
\begin{align*}
\int f(A) \Pi(\dd A  | \|A\|_{\infty} \leq  R, D_n )
&=
\frac{\int f(A) \boldone[\|A\|_{\infty} \leq  R] \Pi(\dd A  |  D_n )}
{\Pi( \|A\|_{\infty} \leq  R | Y_{1:n})} \\
&=
\frac{\int f(A) \boldone[\|A\|_{\infty} \leq R] \Pi(\dd A  |  D_n )}
{\Pi( \|A\|_{\infty} \leq  R | Y_{1:n})} \\
&=
\frac{\int f(A) \boldone[\|A\|_{\infty} \leq  R] \frac{p_{n,A}}{p_{n,A^*}}\Pi(\dd A )}
{\int \boldone[\|A\|_{\infty} \leq  R]   \frac{p_{n,A}}{p_{n,A^*}}\Pi(\dd A ) }. 
\end{align*}
We define the event $\calA_r$ as in the proof of Theorem 1.
Then, we have the same upper bound of $B_r$ as in the previous section.
By \Eqref{eq:XiBound2}, on the event $\calA_r$,
$$
\int \boldone[\|A\|_{\infty} \leq  R]   \frac{p_{n,A}}{p_{n,A^*}}\Pi(\dd A ) 
\geq \min\{\exp(r^2 + \Xi(1/\sqrt{n})),\exp(\cepsilon r^{2K})\}^{-1}.
$$
Other quantities such as $A_r,C_r,D_r$ are also bounded in the same manner 
because 
$$
\int f(A) \boldone[\|A\|_{\infty} \leq  M] \frac{p_{n,A}}{p_{n,A^*}}\Pi(\dd A )
\leq 
\int f(A)  \frac{p_{n,A}}{p_{n,A^*}}\Pi(\dd A ).
$$
Thus, 
the conditional posterior mean of the squared error 
$\EE\left[\int \|A-A^*\|_n^2 \dd\Pi(A|\|A\|_{\infty} \leq  M, Y_{1:n}) \right]$
can be bounded by the same quantity as the RHS of \Eqref{eq:AAstarIntBound}.

Now, we are going to bound the out-of-sample predictive error:
\begin{align}
\EE_{D_n}\left[\int  \|A - A^* \|_{\LPi}^2 \dd\Pi(A|\|A\|_{\infty} \leq  R, D_n) \right]. 
\label{eq:ExOutPredError}
\end{align}
By the assumption that $\|X\|_1 \leq 1$ a.s., we have that 
$$
|\langle X, A - A^*\rangle| \leq 2 R.
$$
Now, we bound the expected error \eqref{eq:ExOutPredError} by 
$I + II + III$ where, for $\eta = \epsilon \max\{4R,1\}=\frac{1}{\sqrt{n}} \max\{4R,1\}$, $I$, $II$ and $III$ are defined by
\begin{align*}
&
I= \EE_{D_n}\left[\int_{0}^\infty \eta^2 r \boldone_{\calA_r^c} \dd r \right],  \\
& 
II = \EE_{D_n}\left[\int_{0}^\infty 
\eta^2r   \boldone_{\calA_r} \Pi(A: \sqrt{2} \|A - A^*\|_n >  \eta r \mid \|A\|_{\infty} \leq  R, D_n) \dd r \right] ,
 \\
&
III= \EE_{D_n}\left[\int_{0}^\infty \eta^2 r \boldone_{\calA_r}
\Pi(A: \|A - A^*\|_{\LPi}  > \eta r \geq \sqrt{2} \|A - A^*\|_n \mid  \|A\|_{\infty} \leq  R, D_n) 
\dd r
\right].
\end{align*}

Here, $I$ and $II$ are bounded by 
\begin{align}
&I + II \notag \\
\leq 
&
C \eta^2
\Bigg(
\log(\dmax) + K + 
\frac{\Xi(\sqrt{\frac{\cepsilon}{n}})}{\cepsilon} + 
8^K (K+1)!
+
d^*\left(\sum\nolimits_k M_k\right)\log(C_{n,K}) \notag \\
&+ 
\frac{d^*+2}{|\log(\xi)|}\log(C_{n,K})
+ C 
\Bigg),
\label{eq:AAstarIandIIbound}
\end{align}
as in \Eqref{eq:AAstarIntBound}.

Next, we bound the term $III$.
To bound this, we need to evaluate the difference between the empirical norm $\|A-A^*\|_n$ 
and the expected norm $\|A-A^*\|_{\LPi}$, which can be done by Bernstein's inequality:
$$
P\left(\|A-A^*\|_{\LPi} \geq \sqrt{2} \|A-A^*\|_n \right)
\leq \exp\left(- \frac{n \|A-A^*\|_{\LPi}^4}{2(v + \|A-A^*\|_{\infty}^2/3)}  \right),
$$
where $v = \EE_X[(\langle X, A- A^*\rangle^2 -\EE_{\tilde{X}}[\langle \tilde{X}, A- A^*\rangle^2])^2 ]$.
Now $v \leq \EE_X[\langle X, A- A^*\rangle^4] 
\leq \|A-A^*\|_\infty^2\EE_X[\langle X, A- A^*\rangle^2]
=\|A-A^*\|_\infty^2 \|A-A^*\|_{\LPi}^2$. This yields that
$$
P\left(\|A-A^*\|_{\LPi} \geq \sqrt{2} \|A-A^*\|_n \right)
\leq \exp\left[-\frac{n }{2 }  
\left( \frac{\|A-A^*\|_{\LPi}}{ \|A-A^*\|_{\infty}} \right)^2 \right].
$$
If $\|A - A^* \|_{\infty} \leq 2R$, then the RHS is further bounded by 
$
\exp\left(-\frac{n \|A-A^*\|_{\LPi}^2}{8R^2}  \right).
$

To evaluate $III$, we evaluate the expectation of the posterior inside the integral:
\begin{align*}
&
\EE_{D_n}\left[
\boldone_{\calA_r}
\Pi(A: \|A - A^*\|_{\LPi}  > \eta r \geq \sqrt{2} \|A - A^*\|_n \mid  \|A\|_{\infty} \leq  R, D_n) 
\right] \\
\leq
& 
\EE_{X_{1:n}}\left[
\exp(r^2 + \Xi(\epsilon))
\Pi(A: \|A - A^*\|_{\LPi}  > \eta r \geq \sqrt{2} \|A - A^*\|_n,  \|A\|_{\infty} \leq  R  \mid X_{1:n}) 
\right]  \\
\leq
& 
\exp( r^2 + \Xi(1/\sqrt{n}))
\int_A \boldone[\|A-A^*\| >  \eta r,\|A\|_\infty \leq R] 
P(\|A - A^*\|_{\LPi}  >  \eta r \geq \sqrt{2} \|A - A^*\|_n) \Pi(\dd A) \\
\leq
&
\exp( r^2 + \Xi(1/\sqrt{n}) )
\int_A \boldone[\|A-A^*\| >  \eta r] 
\exp\left(-\frac{n \|A-A^*\|_{\LPi}^2}{8R^2}  \right)
\Pi(\dd A)  \\
\leq 
&\exp\left(r^2 + \Xi(1/\sqrt{n}) - \frac{n\eta^2 r^2}{8R^2}\right).
\end{align*}
Therefore, we have that 
\begin{align}
III \leq & \int_{0}^\infty \eta^2 r 
\min\left\{ 1, \exp\left(r^2 + \Xi(1/\sqrt{n}) - \frac{n\eta^2 r^2}{8R^2}\right)\right\} \dd r \notag \\
\leq &
\frac{1}{2} \eta^2 \Xi(1/\sqrt{n}) + 
\int_{r > \sqrt{\Xi(1/\sqrt{n})}}^\infty \eta^2 r 
 \exp\left(-  r^2 \right) \dd r
\notag \\
\leq
&
C \eta^2 \Xi(1/\sqrt{n}),
\label{eq:IIIFinalbound}
\end{align}
where $\eta = \frac{1}{\sqrt{n}}\max\{4R,1\}$ is used in the second inequality.

Combining the inequalities \eqref{eq:AAstarIandIIbound} and \eqref{eq:IIIFinalbound},
we obtain that 
\begin{align*}
&
\EE_{D_n}\left[\int  \|A - A^* \|_{\LPi}^2 \dd\Pi(A|\|A\|_{\infty} \leq  R, D_n) \right] 
\notag \\
\leq
&
\frac{C\max\{R^2,1\}}{n}
\Bigg(
\frac{\log(\dmax)}{2} + 
8^K (K+1)! +
\frac{\Xi(\sqrt{\frac{\cepsilon}{n}})}{\cepsilon} 
+ \Xi(1/\sqrt{n})
\\
&
+
d^*\left(\sum\nolimits_k M_k\right)\log(C_{n,K}) 
+ 
\frac{d^*+2}{|\log(\xi)|}\log(C_{n,K})
+ C 
\Bigg),
\end{align*}
where $C$ is a universal constant.
Noticing $\frac{\Xi(\sqrt{\frac{\cepsilon}{n}})}{\cepsilon} \geq \Xi(1/\sqrt{n})$, we obtain the assertion.

\section{Rejection Sampling II (Proof of Theorem 3)}
\label{sec:ProofTh3}

In this section, the proof of Theorem 3 is given.
Let
\begin{align*}
\mathcal{U}_{R,d}&=
\{ (U^{(1)}, \dots, U^{(K)}) \in \Real^{d\times M_1} \times \dots \times \Real^{d\times M_K} 
\mid 
\|U^{(k)}_{:,j}\|_2 \leq R~(1 \leq k \leq \dmax, 1\leq j \leq M_k ).
\}, \\
\mathcal{U}_{R}&=\bigcup_d \mathcal{U}_{R,d}.
\end{align*}
Define
$$\calF_{r,d} = 
\left\{ A \in \Real^{M_1 \times \dots \times M_K} \mid 
\|A\|_{\max,2} \leq R \right\}.
$$

For $\boldU = (U^{(1)},\dots,U^{(K)})$, we denote by $A_{\boldU} := [[U^{(1)},\dots,U^{(K)}]]$.
Since the rejection sampling considered here is defined on a factorization $A=[[U^{(1)},\dots,U^{(K)}]]$,
the quantities $I,II,III$ are redefined as
\begin{align*}
&
I= \EE_{D_n}\left[\int_{0}^\infty \eta^2 r \boldone_{\calA_r^c} \dd r \right],  \\
& 
II = \EE_{D_n}\left[\int_{0}^\infty 
\eta^2 r   \boldone_{\calA_r} \Pi(\boldU: \sqrt{2} \|A_{\boldU} - A^*\|_n > \eta r \mid \boldU \in \calU_R, D_n) \dd r \right] ,
 \\
&
III= \EE_{D_n}\left[\int_{0}^\infty \eta^2 r \boldone_{\calA_r}
\Pi(\boldU : \|A_{\boldU}  - A^*\|_{\LPi}  > \eta r \geq \sqrt{2} \|A_{\boldU}  - A^*\|_n \mid  \boldU \in \calU_R, D_n) 
\dd r
\right].
\end{align*}

Since $R \geq \|A^*\|_{\max,2}+ \sigmapo$ is satisfied,
one can apply the same upper-bound evaluation of $\Pi(\boldU : \|A_{\boldU} - A^*\|_n < \epsilon,
A_{\boldU} \in \calU_R | d^*)$ as \Eqref{eq:XiUpperBound1}.

Then, in the same event $\calA_r$ as before, we have that
$$
\int \boldone[\|A\|_{\infty} \leq  R]   \frac{p_{n,A}}{p_{n,A^*}}\Pi(\dd A ) 
\geq \min\{\exp(r^2 + \Xi(1/\sqrt{n})),\exp(\cepsilon r^{2K})\}^{-1},
$$
by \Eqref{eq:XiBound2}.


$A_r,D_r$ can be bounded in a similar fashion, except we use $R$ instead of $M_{r,d}$
and $\mathcal{U}_R$ instead of $\calF_{r}$.
Now, in this case, $C_r$ could be zero. 
That is,
\begin{align*}
C_r = \EE_{Y_{1:n}}[\Pi(\mathcal{U}_R^c | \{U^{(k)}\}_k \in \mathcal{U}_R, Y_{1:n}) \boldone_{\calA_r} ]
=0.
\end{align*}

Thus, by resetting $M_{r,d} = R$ and, accordingly,  re-defining 
$$
C_{n,K} = 3K\sqrt{n} R^{\frac{K}{2}},
$$
then
we have the same bound of $I+II$ as \Eqref{eq:AAstarIandIIbound}
except the definition of $C_{n,K}$.

Because $\|A_{\boldU}\|_{\infty} \leq R^K$ for $\boldU \in \calU_R$, 
$III$ is bounded by $\frac{C}{n}R^{2K}\Xi(\epsilon)$.

Therefore, we again obtain that
\begin{align*}
& 
\EE_{D_n}\left[\int \|A_U - A^*\|_{\LPi}^2 \dd \Pi(A_U \mid U \in \calU_R, D_n)\right] \\
\leq 
&
\frac{C\max\{R^{2K},1\}}{n}
\Bigg(
\log(\dmax) + 
8^K (K+1)! +
\frac{\Xi(\sqrt{\frac{\cepsilon}{n}})}{\cepsilon} +
\Xi(\frac{1}{\sqrt{n}})  \\
&+ 
d^*\left(\sum\nolimits_k M_k\right)\log(K\sqrt{n}R^{\frac{K}{2}})
+ 
\frac{d^*+2}{|\log(\xi)|}\log(K\sqrt{n}R^{\frac{K}{2}})
+ C 
\Bigg).
\end{align*}
This yields the assertion.

\section{Auxiliary lemmas}
\begin{proposition}{Tail bound of $\chi^2$ distribution}
\label{lemm:TailBoundChi2}
Let $\chi_k$ be the chi-square random variable with freedom $k$.
Then the tail probability is bounded as 
$$
P(\chi_k \geq k + 2\sqrt{xk} + 2x) \leq \exp(-x) 
$$
$$
P(\chi_k \geq x) \leq \exp\left[\frac{k}{2} + \frac{k}{2}\log\left(\frac{x}{k}\right) - \frac{x}{2}\right],
$$
for $x \geq k$.
\end{proposition}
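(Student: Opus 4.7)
The plan is to derive both inequalities from the Chernoff/Markov method applied to the moment generating function of the chi-square distribution with $k$ degrees of freedom, namely $E[\exp(\lambda \chi_k)] = (1-2\lambda)^{-k/2}$ for $\lambda \in (0, 1/2)$. Markov's inequality on $e^{\lambda \chi_k}$ gives the master bound
\begin{equation*}
P(\chi_k \geq u) \leq \exp\!\left[-\lambda u - \tfrac{k}{2}\log(1-2\lambda)\right]
\end{equation*}
for every admissible $\lambda$, and the two claims will follow from two different specializations.

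For the second inequality I would optimize the exponent in $\lambda$ directly with $u = x$. The first-order condition yields $1-2\lambda = k/x$, which is an admissible Chernoff parameter precisely when $x \geq k$---this is exactly where the range restriction in the statement enters. Substituting back gives $-\lambda x = -(x-k)/2$ and $-\tfrac{k}{2}\log(1-2\lambda) = \tfrac{k}{2}\log(x/k)$, which recombine to the stated bound $\exp[k/2 - x/2 + (k/2)\log(x/k)]$.

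For the first inequality I would follow the classical Laurent--Massart argument. The analytic key is the cumulant bound
\begin{equation*}
-\lambda k - \tfrac{k}{2}\log(1-2\lambda) \leq \frac{k\lambda^2}{1-2\lambda} \qquad (\lambda \in (0,1/2)),
\end{equation*}
which reduces to the elementary inequality $-\log(1-s) - s \leq s^2/(2(1-s))$ on $[0,1)$, verifiable by comparing the Taylor series $\sum_{n \geq 2} s^n/n$ and $\sum_{n \geq 2} s^n/2$ term by term. Applied with $u = k + 2\sqrt{kx} + 2x$, the master bound then gives
\begin{equation*}
P(\chi_k \geq k + 2\sqrt{kx} + 2x) \leq \exp\!\left[-\lambda(2\sqrt{kx} + 2x) + \frac{k\lambda^2}{1-2\lambda}\right].
\end{equation*}
The reparametrization $\lambda = \alpha/(1+2\alpha)$ with $\alpha = \sqrt{x/k}$ (so that $1/(1-2\lambda) = 1+2\alpha$ and $k\alpha^2 = x$) collapses the right-hand exponent to $[-2\alpha x - x]/(1+2\alpha) = -x$ after a short simplification.

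The only mildly nontrivial step is recognizing the reparametrization $\lambda = \alpha/(1+2\alpha)$ that makes the Laurent--Massart algebra simplify cleanly; once that ansatz is written down, both claims are routine Chernoff computations without deeper conceptual obstacles.
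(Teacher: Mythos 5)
Your proposal is correct and complete. The paper itself does not prove this proposition---it only cites Lemma~1 of Laurent and Massart (2000) for the first bound and Lemma~2.2 of Dasgupta and Gupta (2002) for the second---and your two Chernoff computations (the cumulant bound $-\log(1-s)-s\le s^2/(2(1-s))$ with the reparametrization $\lambda=\alpha/(1+2\alpha)$ for the first, and direct optimization of the exponent for the second, where $x\ge k$ is exactly what makes the optimizer admissible) are precisely the arguments in those references, so you have in effect supplied the self-contained proof the paper delegates to the literature.
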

The proofs of the first assertion and the second one are 
given in Lemma 1 of \cite{AS:Laurent+Massart:2002}
and Lemma 2.2 of \cite{RSA:Dasgupta+Gupta:2002} respectively. 

\begin{Lemma}
\label{lemm:SmallBallProb}
The small ball probability of $d$-dimensional Gaussian random variable $g \sim N(0,\sigma \mathrm{I}_d)$ is lower bounded as
$$
P\left(\|g\| \leq \epsilon \right) \geq \left(\frac{\epsilon}{3\sigma d^{\frac{1}{2}}} \right)^d,
$$
for all $\epsilon \leq \sigma \sqrt{d}$.
\end{Lemma}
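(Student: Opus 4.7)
The plan is to lower bound the small-ball probability by integrating the Gaussian density over the ball $B(0,\epsilon) = \{x \in \Real^d : \|x\|\leq \epsilon\}$, replacing the integrand by its smallest value on $B(0,\epsilon)$, and then invoking Stirling's formula to handle the gamma function that arises from the ball's volume.

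First I would write
\begin{equation*}
P(\|g\|\leq \epsilon) = \int_{B(0,\epsilon)} (2\pi\sigma^2)^{-d/2} \exp\bigl(-\|x\|^2/(2\sigma^2)\bigr)\, dx \geq (2\pi\sigma^2)^{-d/2}\, e^{-\epsilon^2/(2\sigma^2)}\,\mathrm{Vol}(B(0,\epsilon)),
\end{equation*}
using that $\|x\|\leq \epsilon$ on the ball. Since $\mathrm{Vol}(B(0,\epsilon)) = \pi^{d/2}\epsilon^d/\Gamma(d/2+1)$, this simplifies to
\begin{equation*}
P(\|g\|\leq \epsilon) \geq \frac{\epsilon^d}{(2\sigma^2)^{d/2}\Gamma(d/2+1)}\, e^{-\epsilon^2/(2\sigma^2)}.
\end{equation*}

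Next I would use the hypothesis $\epsilon \leq \sigma\sqrt{d}$ in the exponential to obtain $e^{-\epsilon^2/(2\sigma^2)} \geq e^{-d/2}$, which gives
\begin{equation*}
P(\|g\|\leq \epsilon) \geq \frac{\epsilon^d e^{-d/2}}{(2\sigma^2)^{d/2}\Gamma(d/2+1)}.
\end{equation*}
Dividing this by the target $(\epsilon/(3\sigma\sqrt{d}))^d$ and simplifying, everything in $\epsilon$ and $\sigma$ cancels and the goal reduces to showing
\begin{equation*}
\frac{3^d\,(d/(2e))^{d/2}}{\Gamma(d/2+1)} \geq 1 \quad \text{for all } d\geq 1.
\end{equation*}

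The final step uses Stirling's approximation, $\Gamma(d/2+1) \leq C\sqrt{d}\,(d/(2e))^{d/2}$ for a fixed constant $C$, so the ratio above is $\geq 3^d/(C\sqrt{d})$, which exceeds $1$ for every $d\geq 1$ with plenty of slack. I expect the only real bookkeeping obstacle to be verifying that the constant $3$ in the denominator of the stated bound is enough to absorb the $\sqrt{d}$ and other constants coming from Stirling uniformly in $d$; since $3^d$ grows geometrically while $\sqrt{d}$ grows polynomially, this is easy to check by hand for the small values of $d$ and holds asymptotically, so the bound is valid for all $d\geq 1$.
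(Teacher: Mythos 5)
Your proof is correct, but it follows a genuinely different route from the paper's. The paper inscribes the $\ell_\infty$-cube of radius $\epsilon/\sqrt{d}$ inside the Euclidean ball, factorizes over the independent coordinates, and uses the elementary one-dimensional estimate $P(|g_j|\leq \delta\sigma)\geq \delta/3$ for $\delta\leq 1$; no volume formula or Stirling-type estimate is needed, and the constant $3$ comes directly from $\frac{2}{\sqrt{2\pi}}e^{-1/2}>\frac{1}{3}$. You instead keep the full Euclidean ball, bound the density below by its minimum value $e^{-\epsilon^2/(2\sigma^2)}\geq e^{-d/2}$ on the ball, and then must control $\Gamma(d/2+1)$ via Stirling. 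Your reduction to $3^d (d/(2e))^{d/2}/\Gamma(d/2+1)\geq 1$ is correct, and with the explicit Stirling bound $\Gamma(d/2+1)\leq \sqrt{\pi d}\,(d/(2e))^{d/2}e^{1/(6d)}$ the ratio is at least $3^d/(e^{1/6}\sqrt{\pi d})$, which indeed exceeds $1$ already at $d=1$ (where it is about $1.4$) and grows geometrically thereafter, so the one step you flagged as needing verification does go through uniformly in $d$. The trade-off: your argument retains the full ball volume rather than just an inscribed cube and so is slightly sharper in principle, but it requires a quantitative form of Stirling's formula; the paper's argument is shorter, fully self-contained, and produces the same constant $3$ with no asymptotics. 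Both are valid proofs of the stated bound.
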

\begin{proof}
The lower bound is obtained in an almost same manner as 
the proof of Proposition 2.3 of \cite{AP:Li&Lindre:1999}.
If $\delta \leq 1$, then we have 
$$
P(|g_j| \leq \delta \sigma) = \frac{2}{\sqrt{2\pi \sigma^2}} \int_{0}^{\delta \sigma}
\frac{2\delta}{\sqrt{2\pi}} \exp\left(-\frac{x^2}{2\sigma^2}\right) \dd x
\geq \frac{2\delta}{\sqrt{2\pi}} \exp(-\frac{1}{2}) \geq \frac{\delta}{3}.
$$
Therefore, for $\epsilon$ such that $\epsilon \leq \sigma \sqrt{d}$, it holds that
\begin{align*}
P\left(\|g\| \leq \epsilon \right) 
&\geq 
P\left(\sqrt{d} \max_j |g_j| \leq \epsilon \right)  \geq
\prod_{j=1}^d P\left(\sqrt{d} |g_j| \leq \epsilon \right) \\
&\geq  \left( \frac{\epsilon}{3\sigma d^{\frac{1}{2}}}\right)^{d}.
\end{align*}
\end{proof}

\begin{Lemma}
\label{lemm:acxKintegral}
For all $a >0,c>0,K \geq 1$, it holds that 
$$
\int_{x \geq a} x \exp\big(-c x^{\frac{2}{K}}\big) \dd x 
= \frac{1}{2}\sum_{i=1}^{K} \frac{K\cdots (K-i+1)}{c^{i}}a^{(K-i)2/K} \exp(-ca^{2/K}).
$$
\end{Lemma}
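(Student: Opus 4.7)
}
The plan is to reduce the integral to an incomplete-Gamma-type integral by a change of variables, and then peel off the polynomial factor by repeated integration by parts.

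First, I would substitute $u = x^{2/K}$, so that $x = u^{K/2}$ and $x\,\dd x = \tfrac{K}{2} u^{K-1}\,\dd u$. This turns the left-hand side into
\begin{equation*}
\int_{x\geq a} x \exp\bigl(-c x^{2/K}\bigr)\,\dd x \;=\; \frac{K}{2}\int_{u \geq a^{2/K}} u^{K-1} e^{-cu}\,\dd u,
\end{equation*}
which is a standard upper incomplete Gamma integral. The goal is then to evaluate $J_m(b) := \int_b^\infty u^{m} e^{-cu}\,\dd u$ in closed form and specialize to $m=K-1$, $b = a^{2/K}$.

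Next I would establish the recursion by integration by parts (differentiating $u^m$, integrating $e^{-cu}$):
\begin{equation*}
J_m(b) \;=\; \frac{b^m}{c} e^{-cb} + \frac{m}{c}\, J_{m-1}(b),
\end{equation*}
with base case $J_0(b) = \tfrac{1}{c} e^{-cb}$. Iterating the recursion $K-1$ times starting from $m=K-1$ gives
\begin{equation*}
J_{K-1}(b) \;=\; e^{-cb}\sum_{j=0}^{K-1} \frac{(K-1)(K-2)\cdots (K-j)}{c^{j+1}}\, b^{K-1-j}.
\end{equation*}
Multiplying by $K/2$ and reindexing $i = j+1$ so that the coefficient $(K-1)\cdots(K-j)\cdot K/2$ becomes $\tfrac{1}{2}K(K-1)\cdots(K-i+1)$, and substituting $b^{K-i} = a^{(K-i)\cdot 2/K}$ and $e^{-cb} = e^{-ca^{2/K}}$, produces exactly the claimed expression. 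A quick sanity check with $K=1$ (giving $\tfrac{1}{2c} e^{-ca^2}$) and $K=2$ (giving $\tfrac{a}{c} e^{-ca} + \tfrac{1}{c^2} e^{-ca}$) confirms the indexing.

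The only obstacle here is the bookkeeping: making sure the falling factorial $K(K-1)\cdots(K-i+1)$ and the exponent $(K-i)\cdot 2/K$ on $a$ line up correctly after the reindexing. The argument can equivalently be organized as a straightforward induction on $K$ using the same integration-by-parts identity, which avoids writing the sum explicitly until the end.
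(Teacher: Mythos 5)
Your proof is correct and takes essentially the same route as the paper: the substitution $u=x^{2/K}$ reducing the integral to $\tfrac{K}{2}\int_{a^{2/K}}^{\infty}u^{K-1}e^{-cu}\,\dd u$, followed by repeated integration by parts. The paper carries out the same integration-by-parts recursion inline rather than isolating it as a recursion $J_m$, but the argument is identical and your indexing checks out.
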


\begin{proof}
\begin{align*}
&\int_{x \geq a} x \exp(-c x^{\frac{2}{K}}) \dd x \\
=&
\frac{K}{2} \int_{x \geq a^{2/K}} x^{K-1} \exp(-cx) \dd x \\
=&
\frac{K}{2}\left[-\frac{1}{c}x^{K-1}\exp(-cx) \right]_{a^{2/K}}^{\infty}
+ \frac{K(K-1)}{2c} \int_{a^{2/K}}^\infty x^{K-2}\exp(-cx) \dd x \\
=&
\frac{K}{2c}a^{(K-1)2/K} \exp(-ca^{2/K}) 
+ \frac{K(K-1)}{2c} \int_{a^{2/K}}^\infty x^{K-2}\exp(-cx) \dd x.
\end{align*}
Then, applying recursive argument we obtain the assertion.
\end{proof}

\begin{Lemma}
For all $K\geq 2$, we have
$$
\sum_{i=1}^d u^{(1)}_i u^{(2)}_i \cdots u^{(K)}_i
\leq 
\prod_{k=1}^K \|u^{(k)}\|_K. 
$$
In particular, 
$$
\sum_{i=1}^d u^{(1)}_i u^{(2)}_i \cdots u^{(K)}_i
\leq 
\prod_{k=1}^K \|u^{(k)}\|_2.
$$
\end{Lemma}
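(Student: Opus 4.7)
The plan is to recognize the first inequality as the generalized Hölder inequality with exponents $p_1=\cdots=p_K=K$ (which satisfy $\sum_k 1/p_k = 1$), and prove it via a weighted AM--GM argument after normalization. The second inequality then follows immediately from monotonicity of $\ell_p$-norms.

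First, I would pass to absolute values, bounding the LHS by $\sum_i \prod_k |u^{(k)}_i|$, so it suffices to prove the statement in the non-negative case. Next, I would normalize: if any $\|u^{(k)}\|_K = 0$, the inequality is trivial; otherwise, replace each $u^{(k)}$ by $u^{(k)}/\|u^{(k)}\|_K$ so that each normalized vector has unit $\ell_K$-norm. The target then reduces to showing that $\sum_i \prod_k v^{(k)}_i \le 1$ whenever $\sum_i (v^{(k)}_i)^K = 1$ for every $k$.

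The key step is the pointwise weighted AM--GM inequality with equal weights $1/K$:
\[
\prod_{k=1}^K v^{(k)}_i \;\le\; \frac{1}{K}\sum_{k=1}^K (v^{(k)}_i)^K.
\]
Summing over $i$ and interchanging sums gives
\[
\sum_i \prod_k v^{(k)}_i \;\le\; \frac{1}{K}\sum_{k=1}^K \sum_i (v^{(k)}_i)^K \;=\; \frac{1}{K}\sum_{k=1}^K \|v^{(k)}\|_K^K \;=\; \frac{1}{K}\cdot K \;=\; 1,
\]
which, after undoing the normalization, yields the first claim.

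For the second claim, I would invoke the standard monotonicity of $\ell_p$-norms on $\Real^d$: for $K\ge 2$ we have $\|u\|_K \le \|u\|_2$ (a direct consequence of Hölder or of the power-mean inequality applied to the coordinates). Taking the product over $k=1,\dots,K$ and chaining with the first inequality gives $\sum_i \prod_k u^{(k)}_i \le \prod_k \|u^{(k)}\|_K \le \prod_k \|u^{(k)}\|_2$. There is no real obstacle here; the only subtlety is handling the normalization cleanly and noting that the inequality holds without absolute values by the trivial upper bound $\sum_i \prod_k u^{(k)}_i \le \sum_i \prod_k |u^{(k)}_i|$.
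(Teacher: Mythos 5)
Your proof is correct, but it takes a genuinely different route from the paper. The paper proves the generalized H\"older inequality by iterating the classical two-exponent H\"older inequality: it peels off one factor at a time, first bounding the sum by $\|u^{(1)}\|_K$ times the $\ell_{K/(K-1)}$-norm of the remaining coordinatewise product, then applying H\"older again to extract $\|u^{(2)}\|_K$ with the residual exponent $K/(K-2)$, and so on recursively. You instead normalize each vector to unit $\ell_K$-norm and apply the pointwise AM--GM inequality $\prod_k v^{(k)}_i \le \frac{1}{K}\sum_k (v^{(k)}_i)^K$, which yields the bound in a single summation. Both are standard proofs of generalized H\"older; yours is more self-contained (it does not presuppose the two-term inequality) and avoids the bookkeeping of the shrinking exponents, while the paper's reduction is shorter if one takes two-term H\"older as given. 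One small point in your favor: for the second claim you correctly invoke $\|u\|_K \le \|u\|_2$ for $K\ge 2$ (monotonicity of $\ell_p$-norms in the decreasing direction), whereas the paper's justification states the inequality as $\|u\|_2 \le \|u\|_K$, which is written in the wrong direction even though the intended conclusion is the same.
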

\begin{proof}
By H\"older's inequality, it holds that
\begin{align*}
\sum_{i=1}^d u^{(1)}_i u^{(2)}_i \cdots u^{(K)}_i
\leq \|u^{(1)}\|_{K}
\|(u^{(2)}_i \cdots u^{(K)}_i)_{i=1}^d\|_{K/(K-1)}.
\end{align*}
Applying H\"older's inequality again, we observe that
\begin{align*}
&~~~\|(u^{(2)}_i \cdots u^{(K)}_i)_{i=1}^d\|_{K/(K-1)} 
= \left(\sum_{i=1}^d |u^{(2)}_i \cdots u^{(K)}_i|^{K/(K-1)}\right)^{(K-1)/K} \\
&
\leq
\left[ \left(\sum_{i=1}^d |u^{(2)}|^K\right)^{1/K-1} 
\left(\sum_{i=1}^d |u^{(3)}_i \cdots u^{(K)}_i|^{K/(K-2)}\right)^{(K-2)/(K-1)} \right]^{(K-1)/K} 
\\
&
\leq
\left[ \left(\sum_{i=1}^d |u_i^{(2)}|^K\right)^{1/(K-1)} 
\left(\sum_{i=1}^d |u^{(3)}_i \cdots u^{(K)}_i|^{K/(K-2)}\right)^{(K-2)/(K-1)} \right]^{(K-1)/K}\\
&=
\|u^{(2)}\|_K
\left\| (u^{(3)}_i \cdots u^{(K)}_i)_{i=1}^d \right\|_{K/(K-2)}.
\end{align*}
Then by applying the same argument recursively we obtain the assertion.

The second assertion is obvious because $\|u\|_2 \leq \|u\|_K$ for all $K\geq 2$.
\end{proof}

\bibliographystyle{abbrvnat}
\bibliography{main} 


\end{document}